\newtheorem{corollary}{Corollary}[section]
\newtheorem{proposition}{Proposition}[section]
\newtheorem{remark}{Remark}
\newcommand{\cmark}{\textcolor[rgb]{0.0, 0.85, 0.0}{\ding{51}}}
\newcommand{\xmark}{\textcolor[rgb]{1, 0.0, 0.0}{\ding{55}}}
\newcommand{\breakfull}{\textcolor[rgb]{1, 0.0, 0.1}{\textbf{!}}} 
\newcommand{\breakpartial}{\textcolor[rgb]{1.0, 0.65, 0.0}{\textbf{!}}}
\newcommand{\breakcond}{\textcolor[rgb]{0.0, 0.85, 0}{\textbf{!}}} 
\newcommand{\attentionarrow}{\ding{223}}
\newcommand{\SE}{\mathrm{SE}}
\newcommand{\SO}{\mathrm{SO}}
\definecolor{mygray}{rgb}{0.40,0.40,0.40}
\newcommand{\smallbig}[2]{$\begin{matrix}{\color{mygray} #1} \\ #2 \end{matrix}$ }
\title{Probing Equivariance and Symmetry Breaking in Convolutional Networks}
\author{
Sharvaree Vadgama\thanks{corresponding author: sharvaree.vadgama@gmail.com }\\
AMLab, University of Amsterdam \& New Theory AI\\
\texttt{sharvaree.vadgama@gmail.com}
\AND
Mohammad M. Islam$^{\dagger}$\\
QurAI, University of Amsterdam 
\And
Domas Buracas\thanks{equally contributed}\\
New Theory AI\\
\And
Christian Shewmake\\
UC, Berkeley \& New Theory AI\\
\And
Artem Moskalev\\
Independent Researcher
\And
Erik J. Bekkers\\
AMLab, University of Amsterdam\\
}
\begin{document}

\maketitle

\begin{abstract}

 In this work, we explore the trade-offs of explicit structural priors, particularly group-equivariance. We address this through theoretical analysis and a comprehensive empirical study. To enable controlled and fair comparisons, we introduce \texttt{Rapidash}, a unified group convolutional architecture that allows for different variants of equivariant and non-equivariant models. Our results suggest that more constrained equivariant models outperform less constrained alternatives when aligned with the geometry of the task, and increasing representation capacity does not fully eliminate performance gaps. We see improved performance of models with equivariance and symmetry-breaking through tasks like segmentation, regression, and generation across diverse datasets. Explicit \textit{symmetry breaking} via geometric reference frames consistently improves performance, while \textit{breaking equivariance} through geometric input features can be helpful when aligned with task geometry. Our results provide task-specific performance trends that offer a more nuanced way for model selection.

\end{abstract}


\section{Introduction}
\label{section-intro}
There is an ongoing debate about the necessity of explicit structural priors, particularly group-equivariance. A growing line of work argues that strict equivariance may over-constrain a model and limit its scalability, and that increasing model capacity and training data \citep{qu2024importancescalableimprovingspeed} can compensate for the lack of built-in symmetry. This perspective is supported by models like AlphaFold \citep{abramson2024accurate}, which train equivariance by data augmentation. In contrast, several recent works highlight the limitations of learning equivariance from data alone. \citet{moskalev2023genuineinvariancelearningweighttying} show that learned symmetries can be unreliable and degrade quickly out-of-distribution. \citet{petrache2023approximationgeneralizationtradeoffsapproximategroup} demonstrate that even partial symmetry alignment can improve generalization. Theoretical results by \citet{perin2024abilitydeepnetworkslearn} further highlight that learning equivariance is fundamentally limited when class orbits are sparse or poorly separated. Given these competing perspectives, it remains unclear under what conditions equivariance leads to tangible benefits, and when unconstrained models may be preferable, highlighting the need for a deeper understanding of this trade-off.

To address this debate, we systematically investigate the impact of equivariant and non-equivariant models across a range of learning tasks involving geometry. We begin by formalizing five core research questions: \textit{(i)} how kernel constraints affect expressivity and generalization, \textit{(ii)} whether increased representation capacity can close the performance gap between more and less constrained models, \textit{(iii)} how dataset size influences performance, \textit{(iv)} when pose-informed symmetry breaking improves equivariant models, and \textit{(v)} how different forms of equivariance breaking affect performance. 

We approach these questions through both theoretical and empirical analysis. Theoretically, we show that equivariant models decompose into symmetry-preserving and symmetry-breaking components, and provide formal results linking generalization to pose entropy and inductive bias alignment. To conduct systematic empirical analysis, we introduce \texttt{Rapidash}, a unified group convolutional architecture that enables fine-grained control over equivariance constraints, input/output representations, and symmetry- and equivariance-breaking mechanisms. This design supports systematic comparisons across a wide range of model variants with varying degrees of equivariance. Experiments on molecular property prediction and generation, 3D part segmentation and shape generation, and motion prediction show that more constrained equivariant models outperform less constrained alternatives when aligned with task geometry. Increasing representation capacity helps both strongly and weakly constrained models, but does not fully eliminate performance gaps (Fig. \ref{fig:qm9_scaling}). Models with stronger equivariance are more data-efficient. Explicit symmetry breaking via geometric reference frames consistently improves performance, while breaking equivariance by introducing geometric input features (e.g., coordinates as scalars) can be helpful when aligned with task geometry (Fig. \ref{fig:shapenet_plot}). These results provide concrete answers to the research questions and clarify the conditions under which different degrees of equivariance are most effective. \texttt{Rapidash} with equivariance-breaking and symmetry-breaking achieves state-of-the-art performance on QM9 generation, property prediction, and CMU motion prediction tasks.

\begin{figure}[t]
    \centering
    \begin{minipage}[b]{0.48\textwidth}
        \centering
        \includegraphics[width=\linewidth]{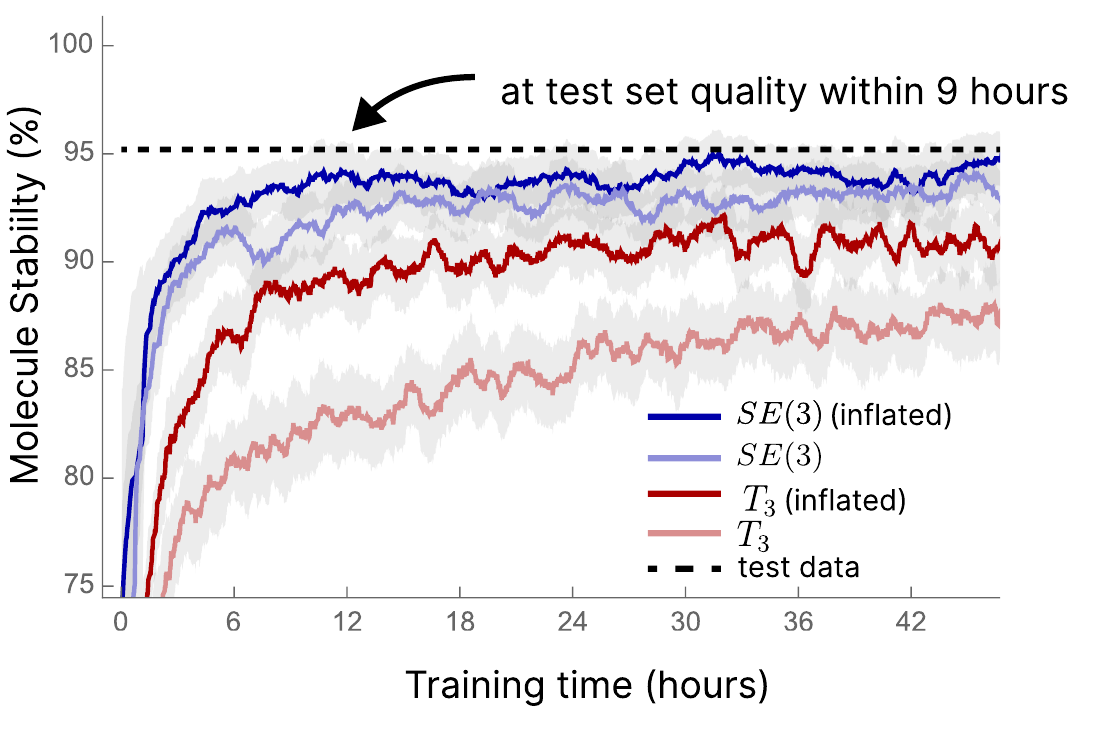}
        \caption{We plot molecular stability and training time for $\SE(3)$ and $T_3$ equivariant models with a hidden dimension of 256/512(inflated) for the molecular generation task on QM9. Inflated models has representations with higher hidden dimension hence increased capacity.}
        \label{fig:qm9_scaling}
    \end{minipage}
    \hfill
    \begin{minipage}[b]{0.44\textwidth}
        \centering
        \includegraphics[width=\linewidth]{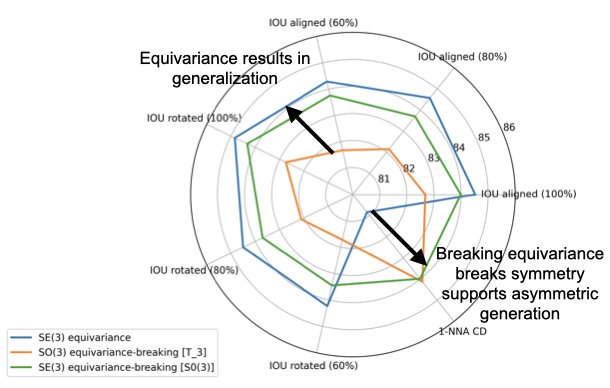}
        \caption{Comparison of \texttt{Rapidash} model variants with different equivariance-breaking on Shapenet part segmentation and generation. (X\%) indicates what portion of the training dataset the model was trained on. We also evaluate on aligned and rotated versions of the test set.}
        \label{fig:shapenet_plot}
    \end{minipage}
    
\end{figure}

The contributions of this paper can be summarized as:
\vspace{-3 mm}
\begin{itemize}

    \item \textbf{Theoretical analysis}: We provide formal results showing when equivariance and symmetry breaking offer provable advantages in expressivity and generalization.

    \item \textbf{Empirical study}: We systematically study the impact of various equivariant and unconstrained models across multiple learning tasks, covering molecular property prediction and generation, 3D part segmentation and shape generation, and motion prediction.

    \item \textbf{Unified architecture}: We introduce \texttt{Rapidash}, a scalable and modular group convolutional architecture. This design supports multiple symmetry groups, input/output variants, allowing for equivariance-breaking and symmetry-breaking options.

    \item \texttt{Rapidash} achieves \textbf{state-of-the-art} performance on QM9 \textit{molecule generation} and CMU \textit{motion prediction}, relative to recent methods from literature.
        
\end{itemize}

The remainder of the paper is organized as follows. Section~\ref{sec:background} reviews relevant background. Sec.~\ref{sec:method} introduces theory, the research questions and \texttt{Rapidash}. Sec.~\ref{sec:relatedwork} discusses related work. Experimental results are presented in Sec.~\ref{sec:experiments}. Finally, Sec.~\ref{sec:discussion} addresses each research question, summarizes key takeaways on when to prefer equivariant or unconstrained models.




\section{Background}
\label{sec:background}
Our study of equivariant deep learning on 3D point clouds investigates the impact of varying linear layer within convolutional architectures (which alternate linear layers with non-linear operations, e.g., activations), while holding other architectural components fixed to isolate the influence of equivariance. We consider a 3D point cloud $\mathcal{X} = \{ \mathbf{x}_1, \dots, \mathbf{x}_N \} \subset \mathbb{R}^3$ of $N$ points with associated feature fields $f:\mathcal{X} \rightarrow \mathbb{R}^C$. Assuming neighborhood sets $\mathcal{N}(i)$ define connectivity for each point $\mathbf{x}_i$ (indexed by $\mathcal{V}=\{1,\dots,N\}$), the general form of a linear layer for such feature fields is given by
\begin{equation}
\label{eq:general_linear}
[L f](\mathbf{x}_i) = \sum_{j \in \mathcal{N}(i)} k(\mathbf{x}_i, \mathbf{x}_j) f(\mathbf{x}_j) \, ,
\end{equation}
where the kernel $k(\mathbf{x}_i,\mathbf{x}_j) \in \mathbb{R}^{C' \times C}$ maps point features to $C'$ output channels.

In works such as \citep{cohen2019general,bekkers2019b}, constraining linear layers (\ref{eq:general_linear}) to be equivariant is shown to yield group convolutions. For instance, translation equivariance restricts the kernel to relative positions, $k(\mathbf{x}_j - \mathbf{x}_i)$. To achieve $\text{SE}(3)$ equivariance on 3D data, the main approaches are \textit{regular} and \textit{steerable} group convolutions \citep{weiler2021coordinate, brandstettergeometric}. Steerable methods (a.k.a. tensor field networks \citep{thomas2018tensor}) embed features in \textit{group representations} (definition in App.~\ref{appendix:math}), which restricts non-linear operations to specialized equivariant ones to preserve these properties \citep{weiler2019general}. Regular group convolutions, conversely, lift features to an extended domain (e.g., $\mathbb{R}^3 \times \text{SO}(3)$) to explicitly track group information, thereby permitting standard pointwise non-linearities \citep{cohen2016group}. As our study isolates linear layer effects by fixing conventional non-linear components, we focus exclusively on \textit{regular} group convolutions.

Within this regular group convolution paradigm, operations can be defined over the full group, creating feature fields on $\mathbb{R}^3 \times \text{SO}(3)$ with convolutions of the form $[Lf](\mathbf{x},\mathbf{R}) = \iint k(\mathbf{R}^T(\mathbf{x}'-\mathbf{x}), \mathbf{R}^T\mathbf{R}') f(\mathbf{x}',\mathbf{R}') d\mathbf{x}'d\mathbf{R}'$. For improved efficiency, these can be formulated over homogeneous spaces (definition in App.~\ref{appendix:math}) of the group. Our work adopts this strategy, utilizing feature fields over the position-orientation space $\mathbb{R}^3 \times S^2$. Convolutions are then given by $[Lf](\mathbf{x},\mathbf{n}) = \iint k(\mathbf{R_n}^T (\mathbf{x}' - \mathbf{x}), \mathbf{R_n}^T \mathbf{n}') f(\mathbf{x}',\mathbf{n}') d\mathbf{x}'d\mathbf{n}'$, with $\mathbf{R_n}$ aligning a canonical axis to $\mathbf{n} \in S^2$, and reduce the integration domain from $\text{SO}(3)$ to $S^2$ at the cost of moderate kernel symmetry constraints (e.g., axial symmetry \citep{bekkers2024fast}). A further simplification for SE(3) equivariance involves convolutions directly on $\mathbb{R}^3$ using isotropic kernels dependent only on distances, $k(\|\mathbf{x}_j - \mathbf{x}_i\|)$. This approach eliminates the orientation axis entirely, but imposes a more restrictive, non-directional kernel constraint.

\section{Methodology}
\label{sec:method}

\subsection{Expressivity and Equivariance Constraint Analysis}


Our analysis of expressivity and equivariance constraints centers on our model, \texttt{Rapidash} (detailed in App.~\ref{appendix:rapidash_architecture}). \texttt{Rapidash} employs regular group convolutions by processing feature fields $f: \mathbb{R}^3 \times S^2 \to \mathbb{R}^C$ over the position-orientation space. This involves convolution kernels on $\mathbb{R}^3 \times S^2$ that respect the quotient space symmetries ($\mathbb{R}^3 \times S^2 \equiv \text{SE}(3) / \text{SO}(2)$), a condition guaranteed to be met by conditioning message passing layers on the geometric invariants derived in \citep{bekkers2024fast}.

A key theoretical aspect of \texttt{Rapidash}'s architecture is its fundamental connection to steerable tensor field networks. This relationship is established through Fourier analysis on the sphere $S^2$, where scalar fields can be decomposed into spherical harmonic coefficients that correspond to features transforming under irreducible representations of $\text{SO}(3)$---the building blocks of steerable networks. This equivalence is formally stated as:

\begin{proposition}[Equivalence via Fourier Transform]
\label{prop:regular_steerable_equivalence}
A regular group convolution operating on scalar fields $f: \mathbb{R}^n \times Y \to \mathbb{R}^C$ (where $Y$ is $S^{n-1}$ or $\text{SO}(n)$) can be equivalently implemented as a steerable group convolution operating on fields of Fourier coefficients $ \hat{f}: \mathbb{R}^n \to V_{\rho}$ (where $V_{\rho}$ is the space of combined irreducible representations) by performing point-wise Fourier transforms $\mathcal{F}_Y$ before the steerable convolution and inverse Fourier transforms $\mathcal{F}_Y^{-1}$ after.
\end{proposition}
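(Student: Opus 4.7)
The plan is to establish the equivalence by invoking harmonic analysis on the orientation manifold $Y$. By the Peter-Weyl theorem for $Y = \SO(n)$, or its specialization via spherical harmonics for $Y = S^{n-1} \cong \SO(n)/\SO(n-1)$, one obtains an $L^2$-isometric decomposition $L^2(Y) \cong \bigoplus_\ell V_\ell$ into isotypic components, each carrying an irreducible representation $\rho_\ell$ of $\SO(n)$. Applied pointwise over $\mathbb{R}^n$, this turns a scalar field $f: \mathbb{R}^n \times Y \to \mathbb{R}^C$ into a field of Fourier coefficients $\hat{f}: \mathbb{R}^n \to V_\rho$ with $V_\rho := \bigoplus_\ell V_\ell \otimes \mathbb{R}^C$, and the pointwise Fourier transform $\mathcal{F}_Y$ is a unitary isomorphism of the corresponding Hilbert spaces of feature fields.

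Next I would write the regular group convolution as an iterated integral and apply Fubini to separate the spatial variable from the orientation variable. For each fixed relative translation, the inner integral over $Y$ is a convolution on $Y$; the convolution theorem for compact groups then turns this inner operation into a blockwise multiplication in Fourier space, one block per irrep $\rho_\ell$, coupling different irreps only via Clebsch-Gordan tensor products. The dependence of the kernel on the spatial offset survives as a spatial kernel $\hat{k}: \mathbb{R}^n \to \mathrm{End}(V_\rho)$, and its action on $\hat{f}$ is exactly the form of a steerable convolution. Translation equivariance is inherited directly from the outer spatial integral, whereas $\SO(n)$-equivariance of the original regular kernel is converted into the standard Clebsch-Gordan block constraints on $\hat{k}$, confirming that $\hat{k}$ is a valid steerable kernel.

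Denoting the regular and steerable operators by $L_{\mathrm{reg}}$ and $L_{\mathrm{steer}}$, the two computations above together yield the intertwining identity $\mathcal{F}_Y \circ L_{\mathrm{reg}} = L_{\mathrm{steer}} \circ \mathcal{F}_Y$. Composing with $\mathcal{F}_Y^{-1}$ on each side gives the desired conjugation $L_{\mathrm{reg}} = \mathcal{F}_Y^{-1} \circ L_{\mathrm{steer}} \circ \mathcal{F}_Y$, which is exactly the claim of the proposition.

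The main obstacle will be the homogeneous-space case $Y = S^{n-1}$ rather than the full group. Only those irreps of $\SO(n)$ with a nonzero $\SO(n-1)$-invariant vector contribute to $L^2(S^{n-1})$, so $V_\rho$ is a proper subsum of the full Peter-Weyl decomposition, and the kernel on $Y$ must respect the induced stabilizer symmetry (for $n=3$ this is the axial symmetry already noted in Section~\ref{sec:background} and built into the invariants of \citep{bekkers2024fast}). Once this bookkeeping is carried out, and the $L^2$-isometry of $\mathcal{F}_Y$ is invoked to justify that no information is lost in either direction, the rest of the argument is essentially a restatement of the classical convolution theorem in representation-theoretic language.
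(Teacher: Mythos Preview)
Your outline is correct and in fact considerably more detailed than anything the paper offers. The paper does not prove Proposition~\ref{prop:regular_steerable_equivalence} at all: both in the main text and in Appendix~B.2 it is stated without proof, followed only by a Remark that defers the argument entirely to \citet[Appx.~A.1]{bekkers2024fast}, \citet{brandstettergeometric}, and \citet{cesa2021program}. There is no Peter--Weyl computation, no Fubini step, and no explicit intertwining identity in the paper itself.

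So your proposal is not merely ``the same approach'' --- it is an actual proof sketch where the paper provides none. The ingredients you list (Peter--Weyl / spherical-harmonic decomposition, the convolution theorem on compact groups, and the stabilizer bookkeeping for the quotient case $Y=S^{n-1}$) are exactly the ones used in the cited references, so your route is the standard one and is consistent with what the paper points to. The one place to be careful, which you already flag, is the homogeneous-space case: the restriction to class-one representations and the axial kernel constraint must be tracked through the Fourier correspondence, but this is precisely the content of \citet[Thm.~1]{bekkers2024fast} that the paper already relies on elsewhere.
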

\begin{remark}
\label{rem:equivalence_implication}
This equivalence (discussed further by \citet[Appx. A.1]{bekkers2024fast}, \citet{brandstettergeometric}, and \citet{cesa2021program}) implies that \texttt{Rapidash} could, in principle, be reformulated as a tensor field network. Thus, our architecture inherently represents the capabilities of this widely used class of steerable networks.
\end{remark}

This connection to steerable networks also underpins its crucial universal approximation property. Building on established results for steerable GNNs \citep{dym2020universality} and related equivariant message passing schemes \citep{villar2021scalars, gasteiger2021gemnet}, the $\text{SE}(3)$-equivariant universal approximation for \texttt{Rapidash} is formally stated as:

\begin{proposition}[Universal Approximation for Rapidash]
\label{prop:rapidash_universal}
\texttt{Rapidash}, as an instance of message passing networks over $\mathbb{R}^3 \times S^2$ with message functions conditioned on the bijective invariant attributes (derived in \citep[Thm. 1]{bekkers2024fast}), is an $\SE(3)$-equivariant universal approximator. This specific universality follows from \citep[Cor. 1.1]{bekkers2024fast}, leveraging the sufficient expressivity of feature maps over $\mathbb{R}^3 \times S^2$.
\end{proposition}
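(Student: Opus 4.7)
The plan is to instantiate \texttt{Rapidash} as a specific case of the general message passing framework of \citet{bekkers2024fast} and then directly invoke their Corollary~1.1. First, I would make precise the identification: \texttt{Rapidash} processes feature fields on $\mathbb{R}^3 \times S^2$ via message passing layers whose message functions depend on the pairwise input points $((\mathbf{x}_i,\mathbf{n}_i),(\mathbf{x}_j,\mathbf{n}_j))$ \emph{only through} the invariant attributes guaranteed to be bijective (complete) in Thm.~1 of \citet{bekkers2024fast}. Once this identification is made, the hypotheses of their Cor.~1.1 are satisfied by construction, and SE(3)-equivariant universal approximation follows.

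Next I would spell out why the bijectivity of the invariants is the crucial ingredient. A message passing layer is SE(3)-equivariant as soon as its message function factors through a \emph{separating} invariant of point pairs in $\mathbb{R}^3 \times S^2$; conversely, any SE(3)-equivariant continuous local interaction can be written as a function of that same invariant. Thus conditioning on a bijective invariant loses no expressive power while retaining equivariance. Combined with a universal MLP in the message/update functions and standard arguments for approximating continuous aggregations by sum-pooling over sufficiently dense neighborhoods, this gives universality at the level of a single layer; stacking then extends universality to global SE(3)-equivariant functions on point clouds, which is precisely the content of Cor.~1.1.

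As a sanity check and an alternative route, I would use Proposition~\ref{prop:regular_steerable_equivalence} to translate the statement to the steerable setting: up to point-wise Fourier transforms on $S^2$, \texttt{Rapidash} is a tensor field network whose irreducible feature types include all bands realized by the chosen $S^2$ discretization/kernel resolution. The universality results for steerable equivariant GNNs of \citet{dym2020universality} and the scalar-based approach of \citet{villar2021scalars}, together with the expressive message design of \citet{gasteiger2021gemnet}, then recover the same conclusion. This gives two independent derivations that agree, and makes explicit that no capacity is lost by working in the regular (group-feature) rather than the steerable picture.

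The main obstacle will be the \emph{sufficient expressivity} clause. Two subtleties have to be handled carefully: (i) moving from SE(3) to the homogeneous space $\mathbb{R}^3 \times S^2 \cong \mathrm{SE}(3)/\mathrm{SO}(2)$ must be shown not to collapse any SE(3)-equivariant target, which requires that the $S^2$ fiber carries enough frame information to distinguish orbits encountered in the target function; and (ii) the invariants of \citet[Thm.~1]{bekkers2024fast} must be genuinely bijective on the configurations that appear in the message passing graph, not merely separating in a generic sense. I would address (i) by exhibiting an explicit lift from $\mathbb{R}^3 \times S^2$ features to full SE(3) features (averaging a learnable section over the $\mathrm{SO}(2)$ stabilizer) and (ii) by citing the geometric completeness argument already given in \citet{bekkers2024fast}, reducing the remaining work to a standard density argument for the MLPs implementing message and update.
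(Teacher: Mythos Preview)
Your proposal is correct and follows the same approach as the paper: the paper's justification (given as a Remark) simply identifies \texttt{Rapidash} as an instance of the message passing framework of \citet{bekkers2024fast} and invokes their Cor.~1.1, noting that it in turn rests on \citet{dym2020universality}, \citet{villar2021scalars}, and \citet{gasteiger2021gemnet}. Your plan does exactly this, though with considerably more elaboration---the bijectivity discussion, the alternative steerable route via Proposition~\ref{prop:regular_steerable_equivalence}, and the treatment of the quotient-space subtleties---than the paper itself provides, which is content to defer all details to the cited corollary.
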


Regardless of \texttt{Rapidash}' strong theoretical expressivity, practical performance must weigh computational cost against actual expressivity. G-convs on extended domains like $\mathbb{R}^3 \times S^2$ involve feature fields of size, e.g., $P \times O \times C$ per point cloud (Points $\times$ Orientations $\times$ Channels), compared to $P \times C$ for $\mathbb{R}^3$-based models. While matching total features (e.g., $O \times C$ in an $\mathbb{R}^3 \times S^2$ model to an enhanced $C'$ in an $\mathbb{R}^3$ model) might suggest a nominally similar capacity, the $O$-axis in the former signifies a structured domain where features are correlated, not merely independent channels. The true expressive power, related to a model's hypothesis space size \citep{elesedy2021provably}, varies with the imposed equivariance constraints: $T(3)$-equivariant $\mathbb{R}^3$ convolutions are least constrained, followed by $\text{SE}(3)$-equivariant $\mathbb{R}^3 \times S^2$ convolutions, and isotropic $\text{SE}(3)$-equivariant $\mathbb{R}^3$ convolutions being most constrained. For fair architectural comparison, we thus must consider scenarios where channel capacity (C) is maximized within practical limits. This leads to key research questions:

\textbf{Research Question 1: Kernel Constraint and Expressivity Impact} \label{RQ:kernel_constraint}     
\textit{How do different equivariance strategies—ordered from most to least constrained as previously outlined—impact task performance.}


\textbf{Research Question 2: Capacity Scaling and Generalization Gaps} \label{RQ:representation_capacity} 
\textit{If more constrained $\text{SE}(3)$-equivariant models outperform $T(3)$-models (despite $T(3)$'s larger theoretical hypothesis space), can scaling $T(3)$-models (e.g., via channel capacity $C$ or training duration) close this apparent generalization gap? How do these architectures compare under nominally matched capacities?}

\textbf{Research Question 3: Data Scaling and Symmetry Relevance in Varied Contexts} \label{RQ:scaling} 
\textit{How does dataset size scaling impact the performance relativities of $\text{SE}(3)$ versus $T(3)$-constrained models on SE(3)-symmetric tasks? In other contexts, such as tasks without SE(3) symmetry or low-data regimes, what is the practical generalization impact (benefit or hindrance) of imposing $\text{SE}(3)$ constraints versus $T(3)$ constraints?}

\subsection{Symmetry Breaking, and Generalization}
This section analyzes generalization performance through the lens of probabilistic symmetry breaking, as introduced by \citet{lawrence2025improvingequivariantnetworksprobabilistic} in their SymPE framework. Their work shows that incorporating informative auxiliary random variables $Z$ (such as pose information $Z \in SO(3)$) into a stochastic model $f({X},Z)$—that e.g. takes a featurized point cloud $X$ as input—can positively impact predictive inference, particularly when using jointly equivariant architectures. 
The ability of a model to leverage such an auxiliary variable $Z$ is reflected in the effective conditional distribution $\mathbb{P}(Z|{X})$ that the model implicitly defines. We adopt this perspective to interpret how a model's architectural constraints, like those in our \texttt{Rapidash} architecture, affect its ability to use global orientation information, satisfying joint invariance $f(g{X}, gZ) = f({X}, Z)$ for $g \in SO(3)$.

We analyze model expressivity by considering two scenarios for the conditional distribution $\mathbb{P}(Z|X)$ of the auxiliary pose $Z \in SO(3)$. First, if the pose $R_{\text{known}}$ is explicitly available (e.g., for aligned datasets like ShapeNet), $\mathbb{P}(Z|{X}) = \delta(Z - R_{\text{known}})$. This provides zero-entropy pose information, enabling a model $f({X}, R_{\text{known}})$ to directly leverage this specific orientation. Second, standard $SO(3)$-\textit{invariant} models $f_{\text{inv}}({X})$, which do not receive explicit pose input, operate as if $Z$ is drawn from a maximum entropy (uniform) distribution. This implies no specific orientation information is utilized, as derived in Proposition~\ref{prop:max_entropy_pose}. The disparity in available pose information—and thus in the effective entropy of $Z$—between these scenarios directly dictates model expressivity:

\begin{corollary}[Expressivity Gain from Low-Entropy Pose Information]
\label{cor:expressivity_gain}
Let the optimal invariant mapping for a task, $f^*(X, R)$, depend non-trivially on canonical orientation $R$. Based on the distinct informational content of $Z$ outlined above: (a) Standard invariant models $f_{inv}(X)$ (maximum entropy pose) lack the expressivity to represent $f^*$; and (b) Pose-conditioned models $f_{cond}(X, R)$ (zero-entropy pose) \textit{can} represent $f^*$. 
\end{corollary}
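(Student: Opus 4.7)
The plan is to establish parts (a) and (b) separately, using the joint invariance $f(gX, gZ) = f(X, Z)$ together with the two entropy regimes for $\mathbb{P}(Z|X)$ described immediately before the statement.

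\textbf{Part (a).} The approach is to leverage Proposition~\ref{prop:max_entropy_pose} to recast any standard $SO(3)$-invariant model $f_{\text{inv}}(X)$ as a stochastic model $f(X, Z)$ marginalized against the uniform Haar measure on $Z$. Combining left-invariance of the Haar measure with joint invariance via the change of variable $Z \mapsto g^{-1} Z$ yields $f_{\text{inv}}(gX) = f_{\text{inv}}(X)$ for every $g \in SO(3)$; more directly, $f_{\text{inv}}$ simply does not consume $R$ as an argument at all. The hypothesis that $f^*(X, R)$ depends non-trivially on $R$ supplies a shape $X_0$ and rotations $R_1 \neq R_2$ with $f^*(X_0, R_1) \neq f^*(X_0, R_2)$. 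A single-valued mapping $f_{\text{inv}}(X_0)$ cannot agree with both targets, giving the inexpressibility claim.

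\textbf{Part (b).} I would construct the witness directly: set $f_{\text{cond}}(X, R) := f^*(X, R)$. The joint invariance demanded by the symmetry-breaking framework, $f_{\text{cond}}(gX, gR) = f_{\text{cond}}(X, R)$, is exactly the invariance property already assumed of $f^*$. In the zero-entropy regime $\mathbb{P}(Z|X) = \delta(Z - R_{\text{known}})$, the model is evaluated at the true canonical orientation, so it realizes $f^*$ pointwise. Representability within the \texttt{Rapidash} hypothesis class then follows from universal approximation (Proposition~\ref{prop:rapidash_universal}).

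\textbf{Main obstacle.} The delicate point is formalizing part (a): one must pin down the precise sense in which a deterministic invariant model corresponds to marginalization over a uniform auxiliary $Z$, which depends on the exact formulation of Proposition~\ref{prop:max_entropy_pose} (expectation of a scalar output versus pushforward of a predictive distribution versus minimizer of an expected loss). One also needs to specify what ``representation'' means---pointwise agreement of outputs, equality of induced predictive distributions, or simply matching optimal-loss behaviour. Once these conventions are fixed, the Haar-invariance step is a one-line change of variables, and part (b) is essentially tautological beyond the invocation of universality.
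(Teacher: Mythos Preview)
Your proposal is correct and lands on essentially the same argument as the paper: for (a), the paper simply observes that $f_{\text{inv}}(X_0)$ is a single value while $f^*(X_0,\cdot)$ takes at least two distinct values, exactly your ``more directly'' clause; for (b), the paper likewise takes $f_{\text{cond}}:=f^*$, checks joint invariance, and appeals to universal approximation. Your Haar-marginalization framing for (a) is an unnecessary detour---the paper never goes through it, and Proposition~\ref{prop:max_entropy_pose} is used only to justify the informal phrase ``maximum entropy pose,'' not as a computational step---so the formalization worries you flag as the main obstacle do not actually arise.
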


The enhanced expressivity afforded by conditioning on pose $R$, as established in Corollary~\ref{cor:expressivity_gain}, is fundamental for enabling symmetry breaking. By providing a determinate canonical reference $R$, models can disambiguate features arising from exact or approximate input symmetries, thereby producing more specific and potentially less symmetric outputs than standard equivariant models. We elaborate this viewpoint in context of ShapeNet segmentation in App.~\ref{app:symmetry_breaking_shapenet} and generative modeling in App.~\ref{app:symmetry_breaking_generative}. Crucially, the well-known generalization advantages of equivariant architectures \citep{elesedy2021provably} also extend to models $f(\mathcal{X},Z)$ that incorporate an auxiliary variable $Z$ for symmetry breaking, as formalized by \citet[][Thm. 6.1]{lawrence2025improvingequivariantnetworksprobabilistic}. This general theorem confirms that structuring models $f(\mathcal{X},Z)$ to be jointly equivariant is critical for realizing provable generalization gains when an auxiliary variable $Z$ is introduced (e.g., for symmetry breaking). Our Proposition~\ref{prop:generalization_pose}  details the application of this principle to our setting where $Z$ is a known, determinate pose $R$.

\textbf{Research Question 4: Empirical Benefits of Pose-Informed Symmetry Breaking}
\label{RQ:symmetrybreaking}
\textit{Does incorporating explicit geometric information, such as pose $R$, to facilitate symmetry breaking lead to improved empirical performance compared to models that do not utilize such information? }

\subsection{Equivariance breaking}

\begin{wrapfigure}{r}{0.58\textwidth} 
  \centering
  \vspace{-6mm}
  \includegraphics[width=\linewidth]{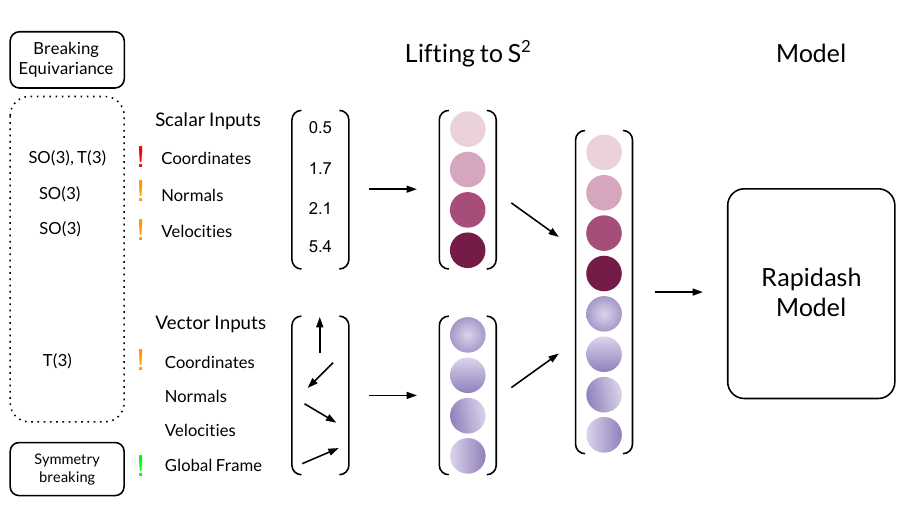}
        \caption{Input variations to Rapidash with base space $\mathbb{R}^3 \times S^2$ and $\SE(3)$-equivariant convolutions lifted to $S^2$. It shows equivariance breaking through different inputs and symmetry breaking by using a global frame input. }
        \label{fig:rapidash_variant}
    \vspace{-3mm}
\end{wrapfigure}

Our architecture, \texttt{Rapidash}, is designed with the flexibility to process both scalar and vector-valued features at its input and output stages. For instance, it can map input vectors $v_i^{in}$ (e.g., initial velocities or normals) to spherical signals and predict output vectors $v_i^{out}$ (e.g., displacements) by appropriately projecting from spherical representations (see App.~\ref{appendix:rapidash_architecture} for architectural specifics). This capability allows \texttt{Rapidash} to naturally incorporate informative geometric inputs such as global pose, normal vectors, or velocities. If these inputs are treated as proper geometric objects that transform consistently under $\SE(3)$ actions, their inclusion can enhance the model's contextual understanding while preserving its overall $\SE(3)$-equivariance.

This same flexibility in handling inputs also allows for controlled deviations from strict $\SE(3)$-equivariance within \texttt{Rapidash}. For example, global coordinates or other geometric data can be supplied as fixed scalar features, which do not transform canonically under $\SE(3)$ actions as true geometric vectors would (Fig.\ref{fig:rapidash_variant}). This presents a critical trade-off: while more input data can be powerful, what is the impact if it compromises the $\SE(3)$-equivariance prior? As discussed in our related work (Section~\ref{sec:relatedwork}), the approximate and relaxed group equivariance approach suggests that such controlled deviations from strict symmetry can be beneficial, potentially improving training dynamics and performance. We explore this empirically with \texttt{Rapidash}, with the mechanisms for deviating from strict equivariance detailed in Appendix~\ref{appendix:equivariance_breaking}. This leads to the following research question:

\textbf{Research Question 5: Equivariance-breaking}
\label{RQ:breaking_equivariance}
\textit{What is the impact of breaking equivariance? Does providing more information to the model at the cost of breaking equivariance improve performance?}

\section{Related Work}
\label{sec:relatedwork}
Weight sharing in convolutional networks was introduced in \citep{lecun2010convolutional} for images, and extended to group transformation in \citep{cohen2016group} (recently formalized for MPNNs \citep{bekkers2024fast}). In the spirit of 'convolution is all you need' \citep{cohen2019general}, several works emerged like \citep{pmlr-v70-ravanbakhsh17a, Worrall_2017_CVPR, kondor2018generalization, bekkers2018roto, weiler2018learning, cohen2019general, weiler2019general, bekkers2019b, sosnovik2019scale,pmlr-v119-finzi20a} presenting different frameworks for group convolutional architectures.

\citet{wang2022approximatelyequivariantnetworksimperfectly, vanderouderaa2022relaxingequivarianceconstraintsnonstationary, pmlr-v202-kim23p, pertigkiozoglou2024improvingequivariantmodeltraining} demonstrate that controlled equivariance breaking can significantly improve performance, also highlighting the benefits of relaxed group equivariance constraints. Similarly, work by \citet{petrache2023approximationgeneralizationtradeoffsapproximategroup} suggests that while aligning data symmetry with architectural symmetry (strict equivariance) is ideal, models with partial or approximate equivariance can still exhibit improved generalization compared to fully non-equivariant ones.
More recent works show different ways to break symmetry, spontaneous symmetry breaking in \citep{xie2024equivariantsymmetrybreakingsets} and symmetry-breaking via random canonicalization in \citep{lawrence2025improvingequivariantnetworksprobabilistic}.
For additional details on internal vs external symmetry and the relation of existing works, see App.~\ref{appendix:equivariance_breaking}.

The literature presents conflicting evidence on the practical benefits of strict equivariance versus less constrained models. For example, while equivariant models with higher-order tensor representations (e.g., E3GNN, NequIP \citep{Batzner_2022}, SEGNN \citep{brandstettergeometric}) have shown improved data efficiency and performance for tasks like molecular interatomic potentials, findings in other domains differ. \citet{thais2023equivarianceneedcharacterizingutility} reported that Lorentz equivariant models \citep{brehmer2023geometricalgebratransformer} offered no clear advantage over non-equivariant counterparts in particle physics. Similarly, in materials science, though $\SO(3)$-equivariant Graph Transformers \citep{liao2023equiformerequivariantgraphattention, liao2024equiformerv2improvedequivarianttransformer} achieved strong results, \citet{qu2024importancescalableimprovingspeed} (EScAIP) demonstrated that extensively scaled non-equivariant models can be competitive or even superior. Conversely, scaling experiments by \citet{brehmer2024doesequivariancematterscale} using transformers for rigid-body simulations argued in favor of equivariant networks.
Such divergent outcomes highlight the challenge of drawing universal conclusions due to common variations in model architectures and datasets, a difficulty our work aims to address by introducing \texttt{Rapidash} as a unified framework for controlled comparative studies.


%

\vspace{-4mm}
\section{Experiments and Results}
\label{sec:experiments} 
\begin{figure}[b]
    \centering
    \begin{minipage}[b]{0.54\textwidth}
        \centering
        \includegraphics[width=\linewidth]{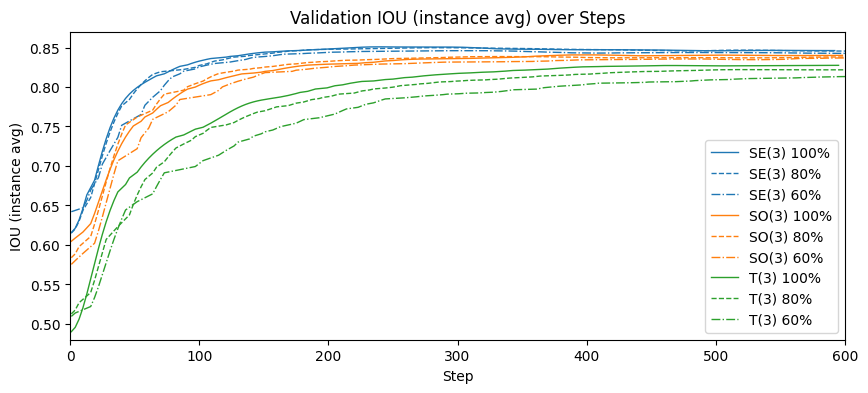}
        \caption{Smoothed IoU (Instance average) performance curves over time for ShapeNet part segmentation. The legend codes correspond to the effective equivariance and the percentage of the training dataset trained on.}
        \label{fig:reduced-data}
    \end{minipage}
    \hfill
    \begin{minipage}[b]{0.42\textwidth}
        \centering
        \includegraphics[width=\linewidth]{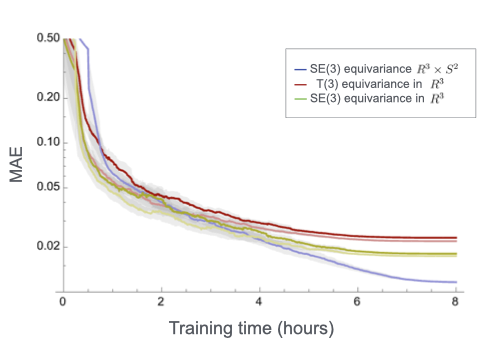}
        \caption{MAE ($10^{-3}$) vs training time for QM9 property ($\mu$) prediction task.}
    \label{fig:qm9_reg}
    \end{minipage}
    \label{fig:rapidash_layer}
\end{figure}


\vspace{-4mm}
In each table, the top section shows models with $\SE(3)$ equivariance, and the bottom with $T_3$ equivariance. Equivariance breaking due to the input specification is indicated by: \breakfull\, (breaking $\SE(3)$), \breakpartial\ (break $T_3$ or $\SO(3)$), while \breakcond\ denotes the model is no longer equivariant.
A \cmark indicates an option is used, \xmark, indicates it is not used, and "-" indicates the option is not available. Lastly, we add references from literature with state-of-the-art (SotA) performance. Due to a page limit, we list the most notable results in the tables of the main text while presenting full extended versions in the Appendix. We emphasize that our experiments aim to demonstrate how a group convolutional architecture and its variants perform to answer the proposed Research Questions, instead of having results that surpass SotA models. 

\textbf{Molecular property prediction \&  generation task}
\label{subsection:molprop}
For predicting molecular properties and generating molecules, we use QM9 \citep{QM9}. We evaluate the prediction of molecular properties using the MAE and compare these with EGNN \citep{satorras2021n}, Dimenet++ \citep{gasteiger2022fastuncertaintyawaredirectionalmessage}, and SE(3)- Transformer \citep{fuchs2020se3transformers3drototranslationequivariant}. For the molecule generation, we train a generative model that uses equivariant denoising layers \ref{appendix:implentation}. We evaluate molecular generation on metrics from \citet{hoogeboom2022equivariantdiffusionmoleculegeneration} that include atomic stability, molecule stability, as well as a new aggregate metric \textit{discovery}, which is a fraction of generated samples that are jointly valid, unique, and new. We compare our models with denoising diffusion \citep{ho2020denoisingdiffusionprobabilisticmodels} model like EDM \citep{hoogeboom2022equivariantdiffusionmoleculegeneration}, P$\Theta$NITA \citep{bekkers2024fast}, and MuDiff \citep{hua2024mudiff}.

\begin{table*}[h]
    \centering
    \caption{Ablation results on QM9 for property prediction and generation. For prediction, we report mean absolute error (MAE) ($\times 10^{-3}$). For generation, we evaluate atom stability, molecule stability, and \textit{discovery}.}
    \label{tab:qm9_property}
    \resizebox{\textwidth}{!}{%
    \begin{tabular}{cccccccccccccccc}
        \toprule
        \multicolumn{12}{c}{\texttt{Rapidash} with internal $\SE(3)$ Equivariance Constraint} \\
        \midrule
        & & & & & \multicolumn{3}{c}{\textbf{Regression}} & \multicolumn{3}{c}{\textbf{Generation}}  \\
        \cmidrule(lr){6-8} \cmidrule(lr){9-11}

        \makecell{\null \\ Model \\ Variation} & \makecell{Type} & \makecell{\breakfull \\ Coordinates \\ as Scalars} & \makecell{\breakpartial \\ Coordinates \\ as Vectors} & \makecell{\null \\ Effective \\ Equivariance} & \makecell{\\ MAE \\ $\mu$\\ (D)} & \makecell{\\ MAE \\ $\alpha$ \\ ($a_o^3$)}  & \makecell{\\ MAE \\ $\epsilon_{HOMO}$ \\ (eV)}& \makecell{\\ Stab\\ Atom \%} & \makecell{\\ Stab\\ Mol \%} & \makecell{\\ Discover \%}  &  \makecell{ \\Normalized \\Epoch \\Time}\\ 
        \midrule
        1 & $\mathbb{R}^3$ & \xmark & - & $\SE(3)$ & $\textbf{\color{blue}17.41}_{\pm 0.37}$ & $53.03_{\pm 0.92}$ & $\textbf{\color{blue}22.29}_{\pm 0.27}$ & - & - &- & $\sim 1$\;\; \\
        2 & $\mathbb{R}^3$ & \cmark & - & none & $17.77_{\pm 0.21}$ & $\textbf{\color{blue}52.22}_{\pm 0.93}$ & $22.44_{\pm 0.15}$ & $\textbf{\color{blue}96.74}_{\pm 0.07}$ & $\textbf{\color{blue}72.39}_{\pm 0.50}$ & $\textbf{\color{blue}89.06}_{\pm 0.39}$ & $\sim 1$\;\; \\
        \hdashline
        
        3 & $\mathbb{R}^3 \times S^2$ & \xmark & \xmark & $\SE(3)$ & $\textbf{\color{blue}10.39}_{\pm 0.33}$ & $42.20_{\pm 1.17}$  & $19.30_{\pm 1.02}$ & $\textbf{\color{blue}99.38}_{\pm 0.01}$ & $\textbf{\color{blue}93.12}_{\pm 0.28}$ & $90.78_{\pm 0.11}$ & $\sim 10 $ \\
        4 & $\mathbb{R}^3 \times S^2$ & \xmark & \cmark & $\SO(3)$ & $10.44_{\pm 0.22}$ & $42.68_{\pm 1.04}$ & $19.64_{\pm 0.75}$ & $\textbf{\color{blue}99.38}_{\pm 0.02}$ & $92.91_{\pm 0.41}$ & $90.26_{\pm 0.20}$ &  $\sim 10 $\\
        
        5 & $\mathbb{R}^3 \times S^2$ & \cmark & \xmark & none & $10.53_{\pm 0.17}$ & $\textbf{\color{blue}40.21}_{\pm 0.56}$ & $\textbf{\color{blue}18.69}_{\pm 0.20}$ & $99.33_{\pm 0.06}$ & $92.71_{\pm 0.51}$ & $\textbf{\color{blue}91.54}_{\pm 0.34}$ &  $\sim 10 $ \\
        \midrule
        \multicolumn{12}{c}{\texttt{Rapidash} with Internal $T_3$ Equivariance Constraint} \\
        \midrule
        6 & $\mathbb{R}^3$ & \xmark & - & $T_3$ & $22.11_{\pm 1.03}$ & $61.97_{\pm 1.05}$ & $25.83_{\pm 0.25}$ & $\textbf{\color{blue}98.57}_{\pm 0.01}$ & $\textbf{\color{blue}81.62}_{\pm 0.15}$ & $\textbf{\color{blue}91.83}_{\pm 0.45}$ & $\sim 1$\;\; \\
        7 & $\mathbb{R}^3$ & \cmark & - & none & $\textbf{\color{blue}21.40}_{\pm 0.37}$ & $\textbf{\color{blue}58.18}_{\pm 0.09}$ & $\textbf{\color{blue}24.54}_{\pm 0.26}$ & $98.40_{\pm 0.02}$ & $78.48_{\pm 0.36}$ & $89.26_{\pm 0.30}$ &$\sim 1$\;\; \\
        \midrule
        \multicolumn{12}{c}{Reference methods from literature} \\
        \midrule
         EGNN  & - & - & - & - & 29.0 & 71.0 & 29.0 & 98.7\;\; & 82.0\;\; & - & - \\
         DimeNet++ & - & - & - & - & 29.7 & 43.5 & 24.6 &  -& - & - &- \\
         $\SE(3)$-T  & - & - & - & - & 53.0 & 51.0 & 53.0 &  -& - & - &- \\
        MuDiff &-&-&-&- &-&- &-& 98.8 \;\; & 89.9\;\; & -\;\; & - \\
        EGNN-EDM &-&-&-&- &- &-&- & 99.3\;\; & 90.7\;\; & 89.5\;\; & - \\
        P$\Theta$NITA  &-&-&-&- &- &-&- & 98.9\;\; & 87.8\;\; & - & -  \\
        \bottomrule \vspace{-3mm}
    \end{tabular}
    }
 
\end{table*}



Tab.~\ref{tab:qm9_property} summarizes results on QM9. For \textit{property prediction}, $\SE(3)$-equivariant \texttt{Rapidash} variants consistently outperform their $T_3$-equivariant counterparts, and the performance gap does not close with extended training (Fig.\ref{fig:qm9_reg}). Among these $\SE(3)$ models, those utilizing the $\mathbb{R}^3 \times \mathbb{S}^2$ base type achieve superior accuracy over $\mathbb{R}^3$-only versions—highlighting the benefits of directional input and internal directional representations—and notably reach performance levels on par with established state-of-the-art methods on several regression tasks. We also find that breaking equivariance by adding scalar coordinate features further benefits both base types, likely by helping to resolve symmetric ambiguities. In \textit{molecule generation}, the best results are decisively achieved by our $\SE(3)$-equivariant \texttt{Rapidash} models, again favoring the $\mathbb{R}^3 \times \mathbb{S}^2$ base type, which demonstrate strong molecular stability in contrast to $T_3$ variants. Remarkably, on this generative task, these models significantly surpass existing state-of-the-art performance. Collectively, these outcomes demonstrate that \texttt{Rapidash}'s core convolutional architecture, while designed for flexibility and controlled experimentation across its variants, is itself highly effective and achieves competitive to leading results in both predictive and generative domains.

\vspace{-3mm}


\begin{table*}[ht]
    \centering
    \caption{Ablation results on ShapeNet3D and CMU datasets. For ShapeNet3D, we report part segmentation performance using mean Intersection over Union (mIoU) on aligned and randomly rotated inputs, and 1-Nearest Neighbor Accuracy (1-NNA) for shape generation. For CMU motion prediction, we report mean squared error (MSE) scaled by $10^{-2}$.}
    \label{tab:shapenet}
    \resizebox{\textwidth}{!}
    {%
    \begin{tabular}{ccccccccccccc}
        \toprule
        \multicolumn{13}{c}{\texttt{Rapidash} with internal $\SE(3)$ Equivariance Constraint} \\
        \midrule
        & & & & & & & & \multicolumn{3}{c}{\textbf{ShapeNet}} & \multicolumn{2}{c}{\textbf{CMU}} \\
        \cmidrule(lr){9-11} \cmidrule(lr){12-13}
        \makecell{\null \\ Model \\ Variation} & \makecell{Type} & \makecell{\breakfull \\ Coordinates \\ as Scalars} & \makecell{\breakpartial \\ Coordinates \\ as Vectors} & \makecell{\breakpartial \\ Normals\textbackslash Velocity \\ as Scalars } & 
        \makecell{\null \\ Normals\textbackslash Velocity \\ as Vectors} & \makecell{\breakcond \\ Symmetry \\ Breaking} & \makecell{\null \\ Effective \\ Equivariance} & \makecell{\\ IoU $\uparrow$ \\ (aligned)} & \makecell{\\ IoU$\uparrow$ \\ (rotated)} & \makecell{\\ 1-NNA $\downarrow$  \\ CD} & \makecell{ MSE $\downarrow$}& \makecell{ MSE $\downarrow$\\ (rotated) }\\ 
        \midrule
        1 & $\mathbb{R}^3$ & \xmark & - & \xmark & - & - & $\SE(3)$ & $79.08_{\pm 0.22}$ & $\textbf{\color{blue}79.08}_{\pm 0.33}$ & - & $>100$ & $>100$  \\
        2 & $\mathbb{R}^3$ & \xmark & - & \cmark & - & - & $T_3$ & $83.54_{\pm 0.04}$ & $49.06_{\pm 2.27}$ & - & $\textbf{\color{blue}4.95}_{\pm 0.11}$&  $\textbf{\color{blue}24.62}_{\pm 2.83}$\\
        3 & $\mathbb{R}^3$ & \cmark & - & \xmark & - & - & none & $83.65_{\pm 0.03}$ & $32.14_{\pm 0.17}$ & $\textbf{\color{blue}69.19}_{\pm 0.30}$ &$6.77_{\pm 0.06}$&$92.10_{\pm 9.41}$ \\
        4 & $\mathbb{R}^3$ & \cmark & - & \cmark & - & - & none & $\textbf{\color{blue}84.43}_{\pm 0.11}$ & $33.83_{\pm 0.36}$ & -& $6.04_{\pm 0.21}$ & $>100$  \\
        \hdashline
        5 & $\mathbb{R}^3 \times S^2$ & \xmark & \xmark & \xmark & \xmark & \xmark & $\SE(3)$ & $84.27_{\pm 0.19}$ & $84.17_{\pm 0.08}$ & -& $5.43_{\pm 0.15}$ & $5.42_{\pm 0.16}$\\
        6 & $\mathbb{R}^3 \times S^2$ & \xmark & \xmark & \xmark & \xmark & \cmark & $\SE(3)$ & $85.44_{\pm 0.03}$ & $85.39_{\pm 0.11}$ & $65.16_{\pm 0.47}$ & $5.73_{\pm 0.14}$ & $5.73_{\pm 0.13}$ \\
        7 & $\mathbb{R}^3 \times S^2$ & \xmark & \xmark & \xmark & \cmark & \xmark & $\SE(3)$ & $84.75_{\pm 0.19}$ & $84.65_{\pm 0.20}$ & - &$4.77_{\pm 0.11}$ &$4.78_{\pm 0.12}$  \\
        8 & $\mathbb{R}^3 \times S^2$ & \xmark & \xmark & \xmark & \cmark & \cmark & $\SE(3)$ & $85.48_{\pm 0.14}$ & $85.41_{\pm 0.09}$ & - & $\textbf{\color{blue}4.69}_{\pm 0.10}$ &$\textbf{\color{blue}4.70}_{\pm 0.10}$ \\
        9 & $\mathbb{R}^3 \times S^2$ & \xmark & \cmark & \xmark & \xmark & \xmark & $\SO(3)$ & $84.47_{\pm 0.15}$ & $84.56_{\pm 0.02}$ & - & $8.07_{\pm 0.30}$&$8.05_{\pm 0.28}$\\
        10 & $\mathbb{R}^3 \times S^2$ & \xmark & \cmark & \xmark & \xmark & \cmark & $\SO(3)$ & $85.38_{\pm 0.04}$ & $85.36_{\pm 0.05}$ & $\textbf{\color{blue}62.01}_{\pm 1.24}$ & $5.36_{\pm 0.14}$&$5.37_{\pm 0.13}$ \\
        11 & $\mathbb{R}^3 \times S^2$ & \xmark & \cmark & \xmark & \cmark & \xmark & $\SO(3)$ & $84.79_{\pm 0.16}$ & $84.67_{\pm 0.21}$ & - &  $5.21_{\pm 0.18}$ & $5.22_{\pm 0.17}$\\
        12 & $\mathbb{R}^3 \times S^2$ & \xmark & \cmark & \xmark & \cmark & \cmark & $\SO(3)$ & $\textbf{\color{blue}85.76}_{\pm 0.04}$ & $\textbf{\color{blue}85.74}_{\pm 0.05}$ & -&  $4.88_{\pm 0.04}$ & $4.89_{\pm 0.04}$ \\
        \hdashline
        13 & $\mathbb{R}^3 \times S^2$ & \xmark & \xmark & \cmark & \xmark & \xmark & $T_3$ & $\textbf{\color{blue}85.46}_{\pm 0.14}$ & $\textbf{\color{blue}42.93}_{\pm 2.58}$ & - &$\textbf{\color{blue}4.73}_{\pm 0.08}$ & $\textbf{\color{blue}32.47}_{\pm 1.97}$ \\
        14 & $\mathbb{R}^3 \times S^2$ & \cmark & \xmark & \xmark & \xmark & \xmark & none & $85.34_{\pm 0.17}$ & $36.65_{\pm 2.52}$ & $\textbf{\color{blue}61.87}_{\pm 0.36}$ & $5.37_{\pm 0.06}$ &$46.69_{\pm 0.27}$ \\
        15 & $\mathbb{R}^3 \times S^2$ & \cmark & \xmark & \cmark & \xmark & \xmark & none & $84.93_{\pm 0.36}$ & $33.35_{\pm 1.66}$ & - & $5.80_{\pm 0.20}$ &$58.66_{\pm 1.72}$\\
        \midrule
        \multicolumn{13}{c}{\texttt{Rapidash} with Internal $T_3$ Equivariance Constraint} \\
        \midrule
        16 & $\mathbb{R}^3$ & \xmark & - & \xmark & - & - & $T_3$ & $85.20_{\pm 0.15}$ & $31.58_{\pm 0.32}$ & $65.81_{\pm 0.26}$ &$5.56_{\pm 0.05}$ &$52.68_{\pm 1.26}$ \\
        17 & $\mathbb{R}^3$ & \xmark & - & \cmark & - & - & $T_3$ & $85.24_{\pm 0.07}$ & $\textbf{\color{blue}33.11}_{\pm 0.81}$ & -& $\textbf{\color{blue}5.21}_{\pm 0.18}$ &$\textbf{\color{blue}39.15}_{\pm 0.69}$  \\
        18 & $\mathbb{R}^3$ & \cmark & - & \xmark & - & - & none & $\textbf{\color{blue}85.26}_{\pm 0.08}$ & $\textbf{\color{blue}31.10}_{\pm 0.77}$ & $\textbf{\color{blue}62.29}_{\pm 0.43}$ & $6.24_{\pm 0.09}$ & $77.98_{\pm 6.39}$ \\
        19 & $\mathbb{R}^3$ & \cmark & - & \cmark & - & - & none & $84.29_{\pm 0.17}$ & $32.97_{\pm 0.59}$ & -&$5.99_{\pm 0.23}$&$ >100$  \\
        \midrule
        \multicolumn{13}{c}{Reference methods from literature (ShapeNet 3D)} \\
        \midrule
         LION  & - & - & - & - & - & - & - & - & -& 51.85 & -\\
         PVD  & - & - & - & - & - & - & - & - & -& 58.65 & -\\
         DPM  & - & - & - & - & - & - & - & - & -& 62.30 & -\\
         DeltaConv  & -& - & - & - & - & - & - & 86.90 & - & - & -\\
         PointNeXt  & - & - & - & - & - & - & - & 87.00 & -& - & -\\
         GeomGCNN $^{*}$  & - & - & - & - & - & - & - & 89.10  & -& - & -\\
        \midrule
        \multicolumn{13}{c}{Reference methods from literature (CMU Motion)} \\
        \midrule
         TFN  & - & - & - & - & - & - & - &-& - & - &  66.90 & -   \\
         EGNN   & - & - & - & - & - & - & - &-& - & - & 31.70 & -   \\
         CGENN   & - & - & - & - & - & - & - &-& - & - & 9.41 & -  \\
         CSMPN $^{**}$ & - & - & - & - & - & - & - &-& - & - &7.55 & - \\
        \bottomrule \vspace{-3mm}\\
        \multicolumn{13}{l}{${}^*$ is trained with 1024 points, instead of 2048. ${}^{**}$ uses additional pre-computed simplicial structures in training data.}
        
        \end{tabular}%
    }
    \vspace{-4mm}
\end{table*}

\paragraph{3D point cloud segmentation \& generation task}
\label{subsection:3Dpoint}
We evaluate our models on ShapeNet3D \citep{chang2015shapenetinformationrich3dmodel} for part
segmentation and generation tasks. We additionally compare our models to various SotA methods like PointNeXt \citep{qian2022pointnextrevisitingpointnetimproved}, Deltaconv \citep{Wiersma_2022}, and GeomGCNN \citep{srivastava2021exploitinglocalgeometryfeature} for the part segmentation task. For the generation task, we compare to LION, \citep{zeng2022lionlatentpointdiffusion}, PVD \cite{zhou20213dshapegenerationcompletion}, and DPM \cite{luo2021diffusion}. 

Tab.~\ref{tab:shapenet} shows our results on the ShapeNet dataset. For \textit{segmentation}, especially on out-of-distribution (rotated) inputs, $\SE(3)$-equivariant \texttt{Rapidash} variants over $\mathbb{R}^3 \times \mathbb{S}^2$ consistently perform best, echoing our QM9 findings. Performance is further enhanced by explicit symmetry breaking (e.g., adding a global frame for a stable orientation reference) or by specific types of equivariance breaking, such as treating input coordinates as fixed vector features to encode absolute spatial location, as may be expected for this non-symmetry-critical task. For 3D shape \textit{generation}, which aims to produce \textit{canonically aligned objects} and thus neither require full $\SE(3)$-equivariance, models based on the $\SE(3)$-equivariant $\mathbb{R}^3 \times \mathbb{S}^2$ architecture remain highly effective. Notably, $\mathbb{R}^3 \times \mathbb{S}^2$ models that incorporate mechanisms that break equivariance achieve the top generative results. The pose-informed, symmetry-broken variants even slightly outperform the translation-equivariant $\mathbb{R}^3$ models.

\vspace{-3mm}
\paragraph{Human motion prediction task}
\label{subsection:human-motion}

We next evaluate the model variations on the CMU Human Motion Capture dataset \citep{Gross-2001-8255}(200 training samples), where the task is to predict motion trajectory (Fig.\ref{fig:CMU-motion}). We list NRI \citep{kipf2018neuralrelationalinferenceinteracting}, EGNN \citep{satorras2021n}, CEGNN \citep{ruhe2023cliffordgroupequivariantneural} and CSMPN \citep{liu2024clifford} models from literature.

Tab.~\ref{tab:shapenet} reports results on CMU motion capture dataset. We observe that $\SE(3)$-equivariant models over $\mathbb{R}^3 \times \mathbb{S}^2$ with velocity vector inputs achieve the lowest MSE. Note that such models fully equivariantly map from input to output (a vector). Breaking symmetry by means of a global frame further improves performance in most of the cases. In contrast, models operating solely over $\mathbb{R}^3$ perform poorly, particularly under (out of distribution) rotations, highlighting the importance of a capability to equivariantly handle directional information. Recall that the $\SE(3)$ equivariant $\mathbb{R}^3$ model 1 cannot solve this task, as it is strictly invariant model, but that tasks require prediction of a non-trivially transforming velocity vector.

\begin{table*}[h!]
    \centering
    \caption{Comparing models with inflated representation capacity against regular models on ShapeNet3D for part segmentation task for aligned and randomly rotated samples.}
    \label{tab:shapenet_extrachannels} 
    \resizebox{\textwidth}{!}
    {%
    \begin{tabular}{cccccccccccc}
        \toprule
        \multicolumn{12}{c}{\texttt{Rapidash} with internal $\SE(3)$ Equivariance Constraint} \\
        \midrule
        \makecell{\null \\ Model \\ Variation} & \makecell{Type} & \makecell{\breakfull \\ Coordinates \\ as Scalars} & \makecell{\breakpartial \\ Coordinates \\ as Vectors} & \makecell{\breakpartial \\ Normals \\ as Scalars } & 
        \makecell{\null \\ Normals \\ as Vectors} & \makecell{\breakcond \\ Symmetry \\ Breaking} & \makecell{\null \\ Effective \\ Equivariance} & \makecell{\\ IoU $\uparrow$ \\ (aligned)} & \makecell{\\ IoU$\uparrow$ \\ (rotated)}  & \makecell{ Normalized \\Epoch \\Time}\\ 
        \midrule
        1 & $\mathbb{R}^3$ & \xmark & - & \xmark & - & - & $\SE(3)$ & \smallbig{80.26_{\pm 0.06}}{80.31_{\pm0.15}} & \smallbig{\textbf{\color{blue}80.33}_{\pm0.13}}{80.20_{\pm0.07}} &  \smallbig{\sim 1 \;\; }{\sim 10} \\
        2 & $\mathbb{R}^3$ & \xmark & - & \cmark & - & - & $T_3$ & \smallbig{83.95_{\pm0.06}}{83.87_{\pm0.09}} & \smallbig{52.09_{\pm0.87}}{49.63_{\pm0.87}} &  \smallbig{\sim 1 \;\; }{\sim 10} \\
        3 & $\mathbb{R}^3$ & \cmark & -& \xmark & - & - & none & \smallbig{84.23_{\pm0.08}}{84.01_{\pm0.06}} & \smallbig{34.15_{\pm0.05}}{32.22_{\pm0.27}}  & \smallbig{\sim 1 \;\; }{\sim 10}\\
        4 & $\mathbb{R}^3$ & \cmark & - & \cmark & - & - & none & \smallbig{\textbf{\color{blue}84.75}_{\pm0.02}}{84.48_{\pm0.16}} & \smallbig{34.07_{\pm0.43}}{32.90_{\pm0.47}} &  \smallbig{\sim 1 \;\; }{\sim 10} \\
        \midrule
        \multicolumn{12}{c}{\texttt{Rapidash} with Internal $T_3$ Equivariance Constraint} \\
        \midrule
        16 & $\mathbb{R}^3$ & \xmark & - & \xmark & - & - & $T_3$ & \smallbig{85.26_{\pm0.01}}{85.38_{\pm0.05}} & \smallbig{31.41_{\pm0.86}}{32.78_{\pm0.59}}  & \smallbig{\sim 1 \;\; }{\sim 10} \\
        17 & $\mathbb{R}^3$ & \xmark & - & \cmark & - & - & $T_3$ & \smallbig{\textbf{\color{blue}85.82}_{\pm0.10}}{85.71_{\pm0.17}} & \smallbig{\textbf{\color{blue}35.42}_{\pm0.98}}{32.25_{\pm0.07}} &  \smallbig{\sim 1 \;\; }{\sim 10} \\
        18 & $\mathbb{R}^3$ & \cmark & - & \xmark & - & - & none & \smallbig{85.51_{\pm0.06}}{85.52_{\pm0.09}} & \smallbig{32.79_{\pm0.74}}{31.11_{\pm0.15}}  & \smallbig{\sim 1 \;\; }{\sim 10}\\
        19 & $\mathbb{R}^3$ & \cmark & - & \cmark & - & - & none & \smallbig{85.66_{\pm0.02}}{85.16_{\pm0.06}} & \smallbig{33.55_{\pm0.49}}{30.62_{\pm0.21}} &  \smallbig{\sim 1 \;\; }{\sim 10} \\
        \bottomrule 
        \end{tabular}%
    }
\end{table*}

\vspace{-3mm}
\paragraph{Representation capacity and data efficiency} To study the effect of representation capacity, we scale up the number of channels in the $\mathbb{R}^3$ models to try to close a potential performance gap to the $\mathbb{R}^3 \times \mathbb{S}^2$ due to sub-optimal network capacity. As shown in Tab.~\ref{tab:shapenet_extrachannels} and App. Tab.~\ref{tab:CMU_extra}, this increase has little impact on performance, suggesting that all model variants are already operating near their effective capacity limits. As an additional experiment, we also evaluate data efficiency on ShapeNet (Fig.~\ref{fig:reduced-data}) and find that models with stronger equivariance require fewer training samples to converge, indicating the advantage of equivariance in low-data settings.

\section{Discussion and Conclusion}
\label{sec:discussion}


\paragraph{Discussion}
Regarding the \textbf{impact of kernel constraints (RQ1)}, our findings indicate that performance is dictated more by the \textit{appropriateness and geometric expressivity of the inductive bias} than by merely minimizing group constraints, under optimized model capacities. For instance, on aligned ShapeNet data (Table~\ref{tab:shapenet}), where $\SO(3)$-equivariance demands might seem harsh, the less constrained $T_3$-equivariant $\mathbb{R}^3$ models perform comparably to, not better than, the richer $\SE(3)$-equivariant $\mathbb{R}^3 \times {S}^2$ variants. Within $\SE(3)$ models, highly constrained isotropic $\mathbb{R}^3$ kernels consistently underperform against the less constrained $\mathbb{R}^3 \times \mathbb{S}^2$ kernels across tasks. Furthermore, on $\SE(3)$-critical tasks (QM9 property prediction, Table~\ref{tab:qm9_property}; CMU motion, Table~\ref{tab:shapenet}), the $\mathbb{R}^3 \times {S}^2$ models significantly surpass the $T_3$ $\mathbb{R}^3$ models. It underscores that a well-suited inductive bias is critical.

Regarding \textbf{capacity scaling and generalization gaps (RQ2)}, our findings indicate that while increased model capacity (e.g., "inflated" models in Fig.~\ref{fig:qm9_scaling}, Tab.~\ref{tab:shapenet_extrachannels} and Tab.~\ref{tab:CMU_extra}) benefits both $T_3$ and $\SE(3)$ approaches, it \textit{does not consistently allow} simpler $T_3$-based $\mathbb{R}^3$ models to \textit{close the performance gap} to $\SE(3)$-equivariant models on the richer $\mathbb{R}^3 \times S^2$ domain. Surprisingly, the latter often achieve superior results more efficiently within a given computational budget despite the higher computational complexity (Fig.~\ref{fig:qm9_scaling}). Moreover, performance frequently \textit{saturates} with increasing capacity (Tab.~\ref{tab:shapenet_extrachannels}), suggesting that models reach the \textit{limits of their architecturally-defined hypothesis space} for geometric expressivity, rather than merely lacking parameters. Thus, the fundamental \textit{architectural choice often outweighs sheer capacity} and computational resources in bridging these generalization gaps.

The influence of \textbf{data scaling and symmetry relevance (RQ3)} highlights that $\SE(3)$-equivariant models particularly excel in \textit{low-data regimes}, showcasing strong data efficiency, as seen on the intrinsically small CMU motion dataset and with reduced ShapeNet training data (Fig.~\ref{fig:reduced-data}). Notably, this data-efficiency benefit of $\SE(3)$ group convolutions persists even for ShapeNet's aligned data, where strict $\SO(3)$-equivariance for the specific output alignment might seem less critical. 

Concerning \textbf{pose-informed symmetry breaking (RQ4)}, our experiments affirm its empirical benefits. Incorporating explicit geometric reference information, such as a global frame, consistently improved performance across diverse tasks like ShapeNet segmentation and CMU motion prediction (Tab.~\ref{tab:shapenet}). This aligns with the theoretical expectation (Corollary~\ref{cor:expressivity_gain}, App.~\ref{app:symmetry_breaking_shapenet}, and App.~\ref{app:symmetry_breaking_generative}) that providing a canonical reference frame enhances expressivity by allowing models to disambiguate inputs.

Finally, regarding \textbf{equivariance-breaking by introducing additional information (RQ5)}, the impact is nuanced and task-dependent, as illustrated in Fig.~\ref{fig:shapenet_plot}. Providing geometric information in a manner that breaks strict $\SE(3)$-equivariance (e.g., coordinates as scalar features) proved advantageous for QM9 tasks (Tab.~\ref{tab:qm9_property}) and for ShapeNet generation (Tab.~\ref{tab:shapenet}), where it likely helped capture task-relevant non-symmetric aspects and absolute spatial references. It also benefited ShapeNet segmentation on non-aligned samples. However, this is does not seem a universal strategy for improvement, as indiscriminate breaking or unsuitable information can be neutral or detrimental to performance.

\paragraph{Limitations}
Our empirical explorations, while extensive and in line with typical academic resources (e.g., training for Fig.~\ref{fig:qm9_scaling} was within a 9-hour budget per run, and tabular results based on up to 1000 epochs), used a modest computational budget. This could limit the exhaustive exploration of scaling laws (RQ2) for simpler $\mathbb{R}^3$-based models, although observed performance saturation suggests these specific architectures may have already approached their practical limits within our framework. Furthermore, this study's scope is centered on convolutional architectures applied to point cloud data; while we believe many principles discussed may generalize, an explicit investigation into other modalities or architectures, such as transformers, remains future work.

\paragraph{Conclusion}
In this paper, we aimed to constructively contribute to an ongoing discussion surrounding the interplay of scale and equivariance in deep learning. Through the introduction of \texttt{Rapidash}---a general regular group convolutional architecture enabling controlled comparisons---coupled with targeted theoretical analysis of equivariance, symmetry breaking, and generalization, and an extensive suite of nuanced experiments, we have sought to provide empirical insights into these critical trade-offs. Our findings offer a structured understanding of when and how different equivariance strategies and symmetry considerations benefit practical model performance, thereby offering guidance for future architectural design and model selection in this domain.

\newpage
{
\bibliographystyle{plainnat}
\bibliography{neurips_2025}

\begin{thebibliography}{70}
\providecommand{\natexlab}[1]{#1}
\providecommand{\url}[1]{\texttt{#1}}
\expandafter\ifx\csname urlstyle\endcsname\relax
  \providecommand{\doi}[1]{doi: #1}\else
  \providecommand{\doi}{doi: \begingroup \urlstyle{rm}\Url}\fi

\bibitem[Abramson et~al.(2024)Abramson, Adler, Dunger, Evans, Green, Pritzel,
  Ronneberger, Willmore, Ballard, Bambrick, et~al.]{abramson2024accurate}
Josh Abramson, Jonas Adler, Jack Dunger, Richard Evans, Tim Green, Alexander
  Pritzel, Olaf Ronneberger, Lindsay Willmore, Andrew~J Ballard, Joshua
  Bambrick, et~al.
\newblock Accurate structure prediction of biomolecular interactions with
  alphafold 3.
\newblock \emph{Nature}, pages 1--3, 2024.

\bibitem[Batzner et~al.(2022)Batzner, Musaelian, Sun, Geiger, Mailoa,
  Kornbluth, Molinari, Smidt, and Kozinsky]{Batzner_2022}
Simon Batzner, Albert Musaelian, Lixin Sun, Mario Geiger, Jonathan~P. Mailoa,
  Mordechai Kornbluth, Nicola Molinari, Tess~E. Smidt, and Boris Kozinsky.
\newblock E(3)-equivariant graph neural networks for data-efficient and
  accurate interatomic potentials.
\newblock \emph{Nature Communications}, 13\penalty0 (1), May 2022.
\newblock ISSN 2041-1723.
\newblock \doi{10.1038/s41467-022-29939-5}.
\newblock URL \url{http://dx.doi.org/10.1038/s41467-022-29939-5}.

\bibitem[Bekkers(2019)]{bekkers2019b}
Erik~J Bekkers.
\newblock B-spline cnns on lie groups.
\newblock In \emph{International Conference on Learning Representations}, 2019.

\bibitem[Bekkers et~al.(2018)Bekkers, Lafarge, Veta, Eppenhof, Pluim, and
  Duits]{bekkers2018roto}
Erik~J Bekkers, Maxime~W Lafarge, Mitko Veta, Koen~AJ Eppenhof, Josien~PW
  Pluim, and Remco Duits.
\newblock Roto-translation covariant convolutional networks for medical image
  analysis.
\newblock In \emph{Medical Image Computing and Computer Assisted
  Intervention--MICCAI 2018: 21st International Conference, Granada, Spain,
  September 16-20, 2018, Proceedings, Part I}, pages 440--448. Springer, 2018.

\bibitem[Bekkers et~al.(2024)Bekkers, Vadgama, Hesselink, der Linden, and
  Romero]{bekkers2024fast}
Erik~J Bekkers, Sharvaree Vadgama, Rob Hesselink, Putri A~Van der Linden, and
  David~W. Romero.
\newblock Fast, expressive se(n) equivariant networks through weight-sharing in
  position-orientation space.
\newblock In \emph{The Twelfth International Conference on Learning
  Representations}, 2024.
\newblock URL \url{https://openreview.net/forum?id=dPHLbUqGbr}.

\bibitem[Brandstetter et~al.(2022)Brandstetter, Hesselink, van~der Pol,
  Bekkers, and Welling]{brandstettergeometric}
Johannes Brandstetter, Rob Hesselink, Elise van~der Pol, Erik~J Bekkers, and
  Max Welling.
\newblock Geometric and physical quantities improve e (3) equivariant message
  passing.
\newblock In \emph{International Conference on Learning Representations}, 2022.

\bibitem[Brehmer et~al.(2023)Brehmer, de~Haan, Behrends, and
  Cohen]{brehmer2023geometricalgebratransformer}
Johann Brehmer, Pim de~Haan, Sönke Behrends, and Taco Cohen.
\newblock Geometric algebra transformer, 2023.
\newblock URL \url{https://arxiv.org/abs/2305.18415}.

\bibitem[Brehmer et~al.(2024)Brehmer, Behrends, de~Haan, and
  Cohen]{brehmer2024doesequivariancematterscale}
Johann Brehmer, Sönke Behrends, Pim de~Haan, and Taco Cohen.
\newblock Does equivariance matter at scale?, 2024.
\newblock URL \url{https://arxiv.org/abs/2410.23179}.

\bibitem[Cesa et~al.(2021)Cesa, Lang, and Weiler]{cesa2021program}
Gabriele Cesa, Leon Lang, and Maurice Weiler.
\newblock A program to build e (n)-equivariant steerable cnns.
\newblock In \emph{International Conference on Learning Representations}, 2021.

\bibitem[Chang et~al.(2015)Chang, Funkhouser, Guibas, Hanrahan, Huang, Li,
  Savarese, Savva, Song, Su, Xiao, Yi, and
  Yu]{chang2015shapenetinformationrich3dmodel}
Angel~X. Chang, Thomas Funkhouser, Leonidas Guibas, Pat Hanrahan, Qixing Huang,
  Zimo Li, Silvio Savarese, Manolis Savva, Shuran Song, Hao Su, Jianxiong Xiao,
  Li~Yi, and Fisher Yu.
\newblock Shapenet: An information-rich 3d model repository, 2015.
\newblock URL \url{https://arxiv.org/abs/1512.03012}.

\bibitem[Chollet(2017)]{chollet2017xception}
Fran{\c{c}}ois Chollet.
\newblock Xception: Deep learning with depthwise separable convolutions.
\newblock In \emph{Proceedings of the IEEE conference on computer vision and
  pattern recognition}, pages 1251--1258, 2017.

\bibitem[Cohen and Welling(2016)]{cohen2016group}
Taco Cohen and Max Welling.
\newblock Group equivariant convolutional networks.
\newblock In \emph{International conference on machine learning}, pages
  2990--2999. PMLR, 2016.

\bibitem[Cohen et~al.(2019)Cohen, Geiger, and Weiler]{cohen2019general}
Taco~S Cohen, Mario Geiger, and Maurice Weiler.
\newblock A general theory of equivariant cnns on homogeneous spaces.
\newblock \emph{Advances in neural information processing systems}, 32, 2019.

\bibitem[Dym and Maron(2020)]{dym2020universality}
Nadav Dym and Haggai Maron.
\newblock On the universality of rotation equivariant point cloud networks.
\newblock In \emph{International Conference on Learning Representations}, 2020.

\bibitem[Elesedy and Zaidi(2021)]{elesedy2021provably}
Bryn Elesedy and Sheheryar Zaidi.
\newblock Provably strict generalisation benefit for equivariant models.
\newblock In \emph{International conference on machine learning}, pages
  2959--2969. PMLR, 2021.

\bibitem[Fey and Lenssen(2019)]{fey2019fast}
Matthias Fey and Jan~Eric Lenssen.
\newblock Fast graph representation learning with pytorch geometric.
\newblock \emph{arXiv preprint arXiv:1903.02428}, 2019.

\bibitem[Finzi et~al.(2020)Finzi, Stanton, Izmailov, and
  Wilson]{pmlr-v119-finzi20a}
Marc Finzi, Samuel Stanton, Pavel Izmailov, and Andrew~Gordon Wilson.
\newblock Generalizing convolutional neural networks for equivariance to lie
  groups on arbitrary continuous data.
\newblock In Hal~Daumé III and Aarti Singh, editors, \emph{Proceedings of the
  37th International Conference on Machine Learning}, volume 119 of
  \emph{Proceedings of Machine Learning Research}, pages 3165--3176. PMLR,
  13--18 Jul 2020.
\newblock URL \url{https://proceedings.mlr.press/v119/finzi20a.html}.

\bibitem[Fuchs et~al.(2020)Fuchs, Worrall, Fischer, and
  Welling]{fuchs2020se3transformers3drototranslationequivariant}
Fabian~B. Fuchs, Daniel~E. Worrall, Volker Fischer, and Max Welling.
\newblock Se(3)-transformers: 3d roto-translation equivariant attention
  networks, 2020.
\newblock URL \url{https://arxiv.org/abs/2006.10503}.

\bibitem[Gasteiger et~al.(2021)Gasteiger, Becker, and
  G{\"u}nnemann]{gasteiger2021gemnet}
Johannes Gasteiger, Florian Becker, and Stephan G{\"u}nnemann.
\newblock Gemnet: Universal directional graph neural networks for molecules.
\newblock In A.~Beygelzimer, Y.~Dauphin, P.~Liang, and J.~Wortman Vaughan,
  editors, \emph{Advances in Neural Information Processing Systems}, 2021.
\newblock URL \url{https://openreview.net/forum?id=HS_sOaxS9K-}.

\bibitem[Gasteiger et~al.(2022)Gasteiger, Giri, Margraf, and
  Günnemann]{gasteiger2022fastuncertaintyawaredirectionalmessage}
Johannes Gasteiger, Shankari Giri, Johannes~T. Margraf, and Stephan Günnemann.
\newblock Fast and uncertainty-aware directional message passing for
  non-equilibrium molecules, 2022.
\newblock URL \url{https://arxiv.org/abs/2011.14115}.

\bibitem[Gilmer et~al.(2017)Gilmer, Schoenholz, Riley, Vinyals, and
  Dahl]{pmlr-v70-gilmer17a}
Justin Gilmer, Samuel~S. Schoenholz, Patrick~F. Riley, Oriol Vinyals, and
  George~E. Dahl.
\newblock Neural message passing for quantum chemistry.
\newblock In Doina Precup and Yee~Whye Teh, editors, \emph{Proceedings of the
  34th International Conference on Machine Learning}, volume~70 of
  \emph{Proceedings of Machine Learning Research}, pages 1263--1272. PMLR,
  06--11 Aug 2017.
\newblock URL \url{https://proceedings.mlr.press/v70/gilmer17a.html}.

\bibitem[Gross and Shi(2001)]{Gross-2001-8255}
Ralph Gross and Jianbo Shi.
\newblock The cmu motion of body (mobo) database.
\newblock Technical Report CMU-RI-TR-01-18, Carnegie Mellon University,
  Pittsburgh, PA, June 2001.

\bibitem[Ho et~al.(2020)Ho, Jain, and
  Abbeel]{ho2020denoisingdiffusionprobabilisticmodels}
Jonathan Ho, Ajay Jain, and Pieter Abbeel.
\newblock Denoising diffusion probabilistic models, 2020.
\newblock URL \url{https://arxiv.org/abs/2006.11239}.

\bibitem[Hoogeboom et~al.(2022)Hoogeboom, Satorras, Vignac, and
  Welling]{hoogeboom2022equivariantdiffusionmoleculegeneration}
Emiel Hoogeboom, Victor~Garcia Satorras, Clément Vignac, and Max Welling.
\newblock Equivariant diffusion for molecule generation in 3d, 2022.
\newblock URL \url{https://arxiv.org/abs/2203.17003}.

\bibitem[Hua et~al.(2024)Hua, Luan, Xu, Ying, Fu, Ermon, and
  Precup]{hua2024mudiff}
Chenqing Hua, Sitao Luan, Minkai Xu, Zhitao Ying, Jie Fu, Stefano Ermon, and
  Doina Precup.
\newblock Mudiff: Unified diffusion for complete molecule generation.
\newblock In \emph{Learning on Graphs Conference}, pages 33--1. PMLR, 2024.

\bibitem[Huang et~al.(2022)Huang, Han, Rong, Xu, Sun, and
  Huang]{huang2022equivariantgraphmechanicsnetworks}
Wenbing Huang, Jiaqi Han, Yu~Rong, Tingyang Xu, Fuchun Sun, and Junzhou Huang.
\newblock Equivariant graph mechanics networks with constraints, 2022.
\newblock URL \url{https://arxiv.org/abs/2203.06442}.

\bibitem[Karras et~al.(2022)Karras, Aittala, Aila, and
  Laine]{karras2022elucidatingdesignspacediffusionbased}
Tero Karras, Miika Aittala, Timo Aila, and Samuli Laine.
\newblock Elucidating the design space of diffusion-based generative models,
  2022.
\newblock URL \url{https://arxiv.org/abs/2206.00364}.

\bibitem[Kim et~al.(2023)Kim, Lee, Yang, and Lee]{pmlr-v202-kim23p}
Hyunsu Kim, Hyungi Lee, Hongseok Yang, and Juho Lee.
\newblock Regularizing towards soft equivariance under mixed symmetries.
\newblock In Andreas Krause, Emma Brunskill, Kyunghyun Cho, Barbara Engelhardt,
  Sivan Sabato, and Jonathan Scarlett, editors, \emph{Proceedings of the 40th
  International Conference on Machine Learning}, volume 202 of
  \emph{Proceedings of Machine Learning Research}, pages 16712--16727. PMLR,
  23--29 Jul 2023.
\newblock URL \url{https://proceedings.mlr.press/v202/kim23p.html}.

\bibitem[Kingma and Ba(2014)]{kingma2014adam}
Diederik~P Kingma and Jimmy Ba.
\newblock Adam: A method for stochastic optimization.
\newblock \emph{arXiv preprint arXiv:1412.6980}, 2014.

\bibitem[Kipf et~al.(2018)Kipf, Fetaya, Wang, Welling, and
  Zemel]{kipf2018neuralrelationalinferenceinteracting}
Thomas Kipf, Ethan Fetaya, Kuan-Chieh Wang, Max Welling, and Richard Zemel.
\newblock Neural relational inference for interacting systems, 2018.
\newblock URL \url{https://arxiv.org/abs/1802.04687}.

\bibitem[Knigge et~al.(2022)Knigge, Romero, and Bekkers]{knigge2022exploiting}
David~M Knigge, David~W Romero, and Erik~J Bekkers.
\newblock Exploiting redundancy: Separable group convolutional networks on lie
  groups.
\newblock In \emph{International Conference on Machine Learning}, pages
  11359--11386. PMLR, 2022.

\bibitem[Kondor and Trivedi(2018)]{kondor2018generalization}
Risi Kondor and Shubhendu Trivedi.
\newblock On the generalization of equivariance and convolution in neural
  networks to the action of compact groups.
\newblock In \emph{International conference on machine learning}, pages
  2747--2755. PMLR, 2018.

\bibitem[Kondor et~al.(2018)Kondor, Lin, and Trivedi]{kondor2018clebsch}
Risi Kondor, Zhen Lin, and Shubhendu Trivedi.
\newblock Clebsch--gordan nets: a fully fourier space spherical convolutional
  neural network.
\newblock \emph{Advances in Neural Information Processing Systems}, 31, 2018.

\bibitem[Kuipers and Bekkers(2023)]{kuipers2023regular}
Thijs~P Kuipers and Erik~J Bekkers.
\newblock Regular se (3) group convolutions for volumetric medical image
  analysis.
\newblock In \emph{Medical Image Computing and Computer Assisted
  Intervention--MICCAI 2023}. arXiv preprint arXiv:2306.13960, 2023.

\bibitem[Lawrence et~al.(2025{\natexlab{a}})Lawrence, Portilheiro, Zhang, and
  Kaba]{lawrence2025improving}
Hannah Lawrence, Vasco Portilheiro, Yan Zhang, and S{\'e}kou-Oumar Kaba.
\newblock Improving equivariant networks with probabilistic symmetry breaking.
\newblock In \emph{The Thirteenth International Conference on Learning
  Representations}, 2025{\natexlab{a}}.
\newblock URL \url{https://openreview.net/forum?id=ZE6lrLvATd}.

\bibitem[Lawrence et~al.(2025{\natexlab{b}})Lawrence, Portilheiro, Zhang, and
  Kaba]{lawrence2025improvingequivariantnetworksprobabilistic}
Hannah Lawrence, Vasco Portilheiro, Yan Zhang, and Sékou-Oumar Kaba.
\newblock Improving equivariant networks with probabilistic symmetry breaking,
  2025{\natexlab{b}}.
\newblock URL \url{https://arxiv.org/abs/2503.21985}.

\bibitem[LeCun et~al.(2010)LeCun, Kavukcuoglu, and
  Farabet]{lecun2010convolutional}
Yann LeCun, Koray Kavukcuoglu, and Cl{\'e}ment Farabet.
\newblock Convolutional networks and applications in vision.
\newblock In \emph{Proceedings of 2010 IEEE international symposium on circuits
  and systems}, pages 253--256. IEEE, 2010.

\bibitem[Liao and Smidt(2023)]{liao2023equiformerequivariantgraphattention}
Yi-Lun Liao and Tess Smidt.
\newblock Equiformer: Equivariant graph attention transformer for 3d atomistic
  graphs, 2023.
\newblock URL \url{https://arxiv.org/abs/2206.11990}.

\bibitem[Liao et~al.(2024)Liao, Wood, Das, and
  Smidt]{liao2024equiformerv2improvedequivarianttransformer}
Yi-Lun Liao, Brandon Wood, Abhishek Das, and Tess Smidt.
\newblock Equiformerv2: Improved equivariant transformer for scaling to
  higher-degree representations, 2024.
\newblock URL \url{https://arxiv.org/abs/2306.12059}.

\bibitem[Liu et~al.(2024)Liu, Ruhe, Eijkelboom, and Forr{\'e}]{liu2024clifford}
Cong Liu, David Ruhe, Floor Eijkelboom, and Patrick Forr{\'e}.
\newblock Clifford group equivariant simplicial message passing networks.
\newblock In \emph{The Twelfth International Conference on Learning
  Representations}, 2024.
\newblock URL \url{https://openreview.net/forum?id=Zz594UBNOH}.

\bibitem[Liu et~al.(2022)Liu, Mao, Wu, Feichtenhofer, Darrell, and
  Xie]{liu2022convnet}
Zhuang Liu, Hanzi Mao, Chao-Yuan Wu, Christoph Feichtenhofer, Trevor Darrell,
  and Saining Xie.
\newblock A convnet for the 2020s.
\newblock In \emph{Proceedings of the IEEE/CVF conference on computer vision
  and pattern recognition}, pages 11976--11986, 2022.

\bibitem[Luo and Hu(2021)]{luo2021diffusion}
Shitong Luo and Wei Hu.
\newblock Diffusion probabilistic models for 3d point cloud generation.
\newblock In \emph{Proceedings of the IEEE/CVF conference on computer vision
  and pattern recognition}, pages 2837--2845, 2021.

\bibitem[Moskalev et~al.(2023)Moskalev, Sepliarskaia, Bekkers, and
  Smeulders]{moskalev2023genuineinvariancelearningweighttying}
Artem Moskalev, Anna Sepliarskaia, Erik~J. Bekkers, and Arnold Smeulders.
\newblock On genuine invariance learning without weight-tying, 2023.
\newblock URL \url{https://arxiv.org/abs/2308.03904}.

\bibitem[Paszke et~al.(2019)Paszke, Gross, Massa, Lerer, Bradbury, Chanan,
  Killeen, Lin, Gimelshein, Antiga, et~al.]{paszke2019pytorch}
Adam Paszke, Sam Gross, Francisco Massa, Adam Lerer, James Bradbury, Gregory
  Chanan, Trevor Killeen, Zeming Lin, Natalia Gimelshein, Luca Antiga, et~al.
\newblock Pytorch: An imperative style, high-performance deep learning library.
\newblock \emph{Advances in neural information processing systems}, 32, 2019.

\bibitem[Perin and Deny(2024)]{perin2024abilitydeepnetworkslearn}
Andrea Perin and Stephane Deny.
\newblock On the ability of deep networks to learn symmetries from data: A
  neural kernel theory, 2024.
\newblock URL \url{https://arxiv.org/abs/2412.11521}.

\bibitem[Pertigkiozoglou et~al.(2024)Pertigkiozoglou, Chatzipantazis, Trivedi,
  and Daniilidis]{pertigkiozoglou2024improvingequivariantmodeltraining}
Stefanos Pertigkiozoglou, Evangelos Chatzipantazis, Shubhendu Trivedi, and
  Kostas Daniilidis.
\newblock Improving equivariant model training via constraint relaxation, 2024.
\newblock URL \url{https://arxiv.org/abs/2408.13242}.

\bibitem[Petrache and
  Trivedi(2023)]{petrache2023approximationgeneralizationtradeoffsapproximategroup}
Mircea Petrache and Shubhendu Trivedi.
\newblock Approximation-generalization trade-offs under (approximate) group
  equivariance.
\newblock In A.~Oh, T.~Naumann, A.~Globerson, K.~Saenko, M.~Hardt, and
  S.~Levine, editors, \emph{Advances in Neural Information Processing Systems},
  volume~36, pages 61936--61959. Curran Associates, Inc., 2023.
\newblock URL
  \url{https://proceedings.neurips.cc/paper_files/paper/2023/file/c35f8e2fc6d81f195009a1d2ae5f6ae9-Paper-Conference.pdf}.

\bibitem[Qi et~al.(2017)Qi, Su, Mo, and
  Guibas]{qi2017pointnetdeeplearningpoint}
Charles~R. Qi, Hao Su, Kaichun Mo, and Leonidas~J. Guibas.
\newblock Pointnet: Deep learning on point sets for 3d classification and
  segmentation, 2017.
\newblock URL \url{https://arxiv.org/abs/1612.00593}.

\bibitem[Qian et~al.(2022)Qian, Li, Peng, Mai, Hammoud, Elhoseiny, and
  Ghanem]{qian2022pointnextrevisitingpointnetimproved}
Guocheng Qian, Yuchen Li, Houwen Peng, Jinjie Mai, Hasan Abed Al~Kader Hammoud,
  Mohamed Elhoseiny, and Bernard Ghanem.
\newblock Pointnext: Revisiting pointnet++ with improved training and scaling
  strategies, 2022.
\newblock URL \url{https://arxiv.org/abs/2206.04670}.

\bibitem[Qu and Krishnapriyan(2024)]{qu2024importancescalableimprovingspeed}
Eric Qu and Aditi~S. Krishnapriyan.
\newblock The importance of being scalable: Improving the speed and accuracy of
  neural network interatomic potentials across chemical domains, 2024.
\newblock URL \url{https://arxiv.org/abs/2410.24169}.

\bibitem[Ramakrishnan~R.(2014)]{QM9}
Rupp M.and Von Lilienfeld~O.A. Ramakrishnan~R., Dral~P.O.
\newblock Quantum chemistry structures and properties of 134 kilo molecules,
  2014.

\bibitem[Ravanbakhsh et~al.(2017)Ravanbakhsh, Schneider, and
  P{\'o}czos]{pmlr-v70-ravanbakhsh17a}
Siamak Ravanbakhsh, Jeff Schneider, and Barnab{\'a}s P{\'o}czos.
\newblock Equivariance through parameter-sharing.
\newblock In Doina Precup and Yee~Whye Teh, editors, \emph{Proceedings of the
  34th International Conference on Machine Learning}, volume~70 of
  \emph{Proceedings of Machine Learning Research}, pages 2892--2901. PMLR,
  06--11 Aug 2017.
\newblock URL \url{https://proceedings.mlr.press/v70/ravanbakhsh17a.html}.

\bibitem[Ruhe et~al.(2023)Ruhe, Brandstetter, and
  Forré]{ruhe2023cliffordgroupequivariantneural}
David Ruhe, Johannes Brandstetter, and Patrick Forré.
\newblock Clifford group equivariant neural networks, 2023.
\newblock URL \url{https://arxiv.org/abs/2305.11141}.

\bibitem[Satorras et~al.(2021)Satorras, Hoogeboom, and Welling]{satorras2021n}
V{\i}ctor~Garcia Satorras, Emiel Hoogeboom, and Max Welling.
\newblock E (n) equivariant graph neural networks.
\newblock In \emph{International conference on machine learning}, pages
  9323--9332. PMLR, 2021.

\bibitem[Sosnovik et~al.(2019)Sosnovik, Szmaja, and
  Smeulders]{sosnovik2019scale}
Ivan Sosnovik, Micha{\l} Szmaja, and Arnold Smeulders.
\newblock Scale-equivariant steerable networks.
\newblock In \emph{International Conference on Learning Representations}, 2019.

\bibitem[Srivastava and
  Sharma(2021)]{srivastava2021exploitinglocalgeometryfeature}
Siddharth Srivastava and Gaurav Sharma.
\newblock Exploiting local geometry for feature and graph construction for
  better 3d point cloud processing with graph neural networks, 2021.
\newblock URL \url{https://arxiv.org/abs/2103.15226}.

\bibitem[Thais and
  Murnane(2023)]{thais2023equivarianceneedcharacterizingutility}
Savannah Thais and Daniel Murnane.
\newblock Equivariance is not all you need: Characterizing the utility of
  equivariant graph neural networks for particle physics tasks, 2023.
\newblock URL \url{https://arxiv.org/abs/2311.03094}.

\bibitem[Thomas et~al.(2018)Thomas, Smidt, Kearnes, Yang, Li, Kohlhoff, and
  Riley]{thomas2018tensor}
Nathaniel Thomas, Tess Smidt, Steven Kearnes, Lusann Yang, Li~Li, Kai Kohlhoff,
  and Patrick Riley.
\newblock Tensor field networks: Rotation-and translation-equivariant neural
  networks for 3d point clouds.
\newblock \emph{arXiv preprint arXiv:1802.08219}, 2018.

\bibitem[van~der Ouderaa et~al.(2022)van~der Ouderaa, Romero, and van~der
  Wilk]{vanderouderaa2022relaxingequivarianceconstraintsnonstationary}
Tycho F.~A. van~der Ouderaa, David~W. Romero, and Mark van~der Wilk.
\newblock Relaxing equivariance constraints with non-stationary continuous
  filters, 2022.
\newblock URL \url{https://arxiv.org/abs/2204.07178}.

\bibitem[Villar et~al.(2021)Villar, Hogg, Storey-Fisher, Yao, and
  Blum-Smith]{villar2021scalars}
Soledad Villar, David~W Hogg, Kate Storey-Fisher, Weichi Yao, and Ben
  Blum-Smith.
\newblock Scalars are universal: Equivariant machine learning, structured like
  classical physics.
\newblock \emph{Advances in Neural Information Processing Systems},
  34:\penalty0 28848--28863, 2021.

\bibitem[Wang et~al.(2022)Wang, Walters, and
  Yu]{wang2022approximatelyequivariantnetworksimperfectly}
Rui Wang, Robin Walters, and Rose Yu.
\newblock Approximately equivariant networks for imperfectly symmetric
  dynamics, 2022.
\newblock URL \url{https://arxiv.org/abs/2201.11969}.

\bibitem[Weiler and Cesa(2019)]{weiler2019general}
Maurice Weiler and Gabriele Cesa.
\newblock General e (2)-equivariant steerable cnns.
\newblock \emph{Advances in neural information processing systems}, 32, 2019.

\bibitem[Weiler et~al.(2018)Weiler, Hamprecht, and Storath]{weiler2018learning}
Maurice Weiler, Fred~A Hamprecht, and Martin Storath.
\newblock Learning steerable filters for rotation equivariant cnns.
\newblock In \emph{Proceedings of the IEEE Conference on Computer Vision and
  Pattern Recognition}, pages 849--858, 2018.

\bibitem[Weiler et~al.(2021)Weiler, Forr{\'e}, Verlinde, and
  Welling]{weiler2021coordinate}
Maurice Weiler, Patrick Forr{\'e}, Erik Verlinde, and Max Welling.
\newblock Coordinate independent convolutional networks--isometry and gauge
  equivariant convolutions on riemannian manifolds.
\newblock \emph{arXiv preprint arXiv:2106.06020}, 2021.

\bibitem[Wiersma et~al.(2022)Wiersma, Nasikun, Eisemann, and
  Hildebrandt]{Wiersma_2022}
Ruben Wiersma, Ahmad Nasikun, Elmar Eisemann, and Klaus Hildebrandt.
\newblock Deltaconv: anisotropic operators for geometric deep learning on point
  clouds.
\newblock \emph{ACM Transactions on Graphics}, 41\penalty0 (4):\penalty0
  1–10, July 2022.
\newblock ISSN 1557-7368.
\newblock \doi{10.1145/3528223.3530166}.
\newblock URL \url{http://dx.doi.org/10.1145/3528223.3530166}.

\bibitem[Worrall et~al.(2017)Worrall, Garbin, Turmukhambetov, and
  Brostow]{Worrall_2017_CVPR}
Daniel~E. Worrall, Stephan~J. Garbin, Daniyar Turmukhambetov, and Gabriel~J.
  Brostow.
\newblock Harmonic networks: Deep translation and rotation equivariance.
\newblock In \emph{Proceedings of the IEEE Conference on Computer Vision and
  Pattern Recognition (CVPR)}, July 2017.

\bibitem[Wu et~al.(2015)Wu, Song, Khosla, Yu, Zhang, Tang, and
  Xiao]{wu20153dshapenetsdeeprepresentation}
Zhirong Wu, Shuran Song, Aditya Khosla, Fisher Yu, Linguang Zhang, Xiaoou Tang,
  and Jianxiong Xiao.
\newblock 3d shapenets: A deep representation for volumetric shapes, 2015.
\newblock URL \url{https://arxiv.org/abs/1406.5670}.

\bibitem[Xie and Smidt(2024)]{xie2024equivariantsymmetrybreakingsets}
YuQing Xie and Tess Smidt.
\newblock Equivariant symmetry breaking sets, 2024.
\newblock URL \url{https://arxiv.org/abs/2402.02681}.

\bibitem[Zeng et~al.(2022)Zeng, Vahdat, Williams, Gojcic, Litany, Fidler, and
  Kreis]{zeng2022lionlatentpointdiffusion}
Xiaohui Zeng, Arash Vahdat, Francis Williams, Zan Gojcic, Or~Litany, Sanja
  Fidler, and Karsten Kreis.
\newblock Lion: Latent point diffusion models for 3d shape generation, 2022.
\newblock URL \url{https://arxiv.org/abs/2210.06978}.

\bibitem[Zhou et~al.(2021)Zhou, Du, and
  Wu]{zhou20213dshapegenerationcompletion}
Linqi Zhou, Yilun Du, and Jiajun Wu.
\newblock 3d shape generation and completion through point-voxel diffusion,
  2021.
\newblock URL \url{https://arxiv.org/abs/2104.03670}.

\end{thebibliography}
}

\newpage
\appendix

\section{Mathematical Prerequisites and Notations}
\label{appendix:math}
\textbf{Groups.} A \textit{group} is an algebraic structure defined by a set $G$ and a binary operator $\cdot: G \times G \rightarrow G$, known as the \textit{group product}. This structure $(G,\cdot)$ must satisfy four axioms: (1) \textit{closure}, where $\forall_{h,g \in G}: h \cdot g \in G$; (2) the existence of an  \textit{identity} element $e \in G$ such that $\forall_{g \in G}, e\cdot g = g \cdot e = g$, (3) the existence of an \textit{inverse}  element, i.e. $\forall_{g \in G}$ there exists a $g^{-1} \in G$ such that $g^{-1}\cdot g = e$; and (4) \textit{associativity}, where $\forall_{g,h,p \in G}: (g \cdot h) \cdot p {=} g \cdot (h \cdot p)$. Going forward, group product between two elements will be denoted as $g,g' \in G$ by juxtaposition, i.e., as $g\,g'$. 

For Special Euclidean group $\SE(n)$, the group product between two roto-translations $g{=}(\mathbf{x},\mathbf{R})$ and $g'{=}(\mathbf{x}',\mathbf{R}')$ is given by $(\mathbf{x},\mathbf{R}) \, (\mathbf{x}',\mathbf{R}') {=} (\mathbf{R} \mathbf{x}' + \mathbf{x}, \mathbf{R} \, \mathbf{R}')$, and its identity element is given by $e {=} (\mathbf{0}, \mathbf{I})$.

\textbf{Homogeneous Spaces.}
A group can act on spaces other than itself via a \textit{group action} $gT: G \times X \rightarrow X$, where $X$ is the space on which $G$ acts. For simplicity, the action of $g\in G$ on $x\in X$ is denoted as $g\,x$. Such a transformation is called a group action if it is homomorphic to $G$ and its group product. That is, it follows the group structure: $(g\,g')\, x {=} g \,(g' \,x)\ \forall g,g' \in G, x \in X$, and $e \, x {=} x$. For example, consider the space of 3D positions ${X}=\mathbb{R}^3$, e.g., atomic coordinates, acted upon by the group $G{=}\SE(3)$. A position $\mathbf{p} \in \mathbb{R}^3$ is roto-translated by the action of an element $(\mathbf{x},\mathbf{R}) \in SE(3)$ as $(\mathbf{x}, \mathbf{R}) \, \mathbf{p} {=} \mathbf{R} \, \mathbf{p} + \mathbf{x}$.

A group action is termed \textit{transitive} if every element $x \in X$ can be reached from an arbitrary origin $x_0 \in X$ through the action of some $g \in G$, i.e., $x{=}gx_0$. A space $X$ equipped with a transitive action of $G$ is called a \textit{homogeneous space} of $G$. Finally, the \textit{orbit} $G \, x:= \{ g \, x \mid g \in G\}$ of an element $x$ under the action of a group $G$ represents the set of all possible transformations of $x$ by $G$. For homogeneous spaces, $X {=} G \, x_0$ for any arbitrary origin $x_0 \in X$.

\textbf{Quotient spaces.} The aforementioned space of 3D positions $X{=}\mathbb{R}^3$ serves as a homogeneous space of $G = \SE(3)$, as every element $\mathbf{p}$ can be reached by a roto-translation from $\mathbf{0}$, i.e., for every $\mathbf{p}$ there exists a $(\mathbf{x},\mathbf{R})$ such that $\mathbf{p} {=} (\mathbf{x}, \mathbf{R}) \, \mathbf{0} {=} \mathbf{R} \, \mathbf{0} + \mathbf{x} {=} \mathbf{x}$. Note that there are several elements in $SE(3)$ that transport the origin $\mathbf{0}$ to $\mathbf{p}$, as any action with a translation vector $\mathbf{x}{=}\mathbf{p}$ suffices regardless of the rotation $\mathbf{R}$. This is because any rotation $\mathbf{R}' \in \SO(3)$ leaves the origin unaltered.

We denote the set of all elements in $G$ that leave an origin $x_0 \in X$ unaltered the \textit{stabilizer subgroup} $\mathrm{Stab}_G(x_0)$. In subsequent analyses, the symbol $H$ is used to denote the stabilizer subgroup of a chosen origin $x_0$ in a homogeneous space, i.e., $H {=} \mathrm{Stab}_G(x_0)$. We further denote the \textit{left coset} of $H$ in $G$ as $g \, H \coloneq \{ g \, h \mid h \in H \}$. In the example of positions $\mathbf{p} \in X{=}\mathbb{R}^3$ we concluded that we can associate a point $\mathbf{p}$ with many group elements $g \in SE(3)$ that satisfy $\mathbf{p} {=} g \, \mathbf{0}$. In general, letting $g_x$ be any group element s.t. $x {=} g_x \, x_0$, then any group element in the left set $g_x \, H$ is also identified with the point $\mathbf{p}$. Hence, any $x \in X$ can be identified with a left coset $g_x H$ and vice versa. 

Left cosets $g \, H$ then establish an \textit{equivalence relation} $\sim$ among transformations in $G$. We say that two elements $g, g' \in G$ are equivalent, i.e., $g \sim g'$, if and only if $g \, x_0 {=} g' \, x_0$. That is, if they belong to the same coset $g \, H$. The space of left cosets is commonly referred to as the \textit{quotient space} $G / H$.

We consider \textit{feature maps} $f: X \rightarrow \mathbb{R}^{C}$ as multi-channel signals over homogeneous spaces $X$. Here, we treat point clouds as sparse feature maps, e.g., sampled only at atomic positions. In the general continuous setting, we denote the space of feature maps over $X$ with $\mathcal{X}$. Such feature maps undergo group transformations through \textit{regular group representations}  $\rho^{\mathcal{X}}(g):\mathcal{X} \rightarrow \mathcal{X}$ parameterized by $g$, and which transform functions $f \in \mathcal{X}$ via $[\rho^{\mathcal{X}}(g)f](x) {=} f(g^{-1} x)$ .

\textbf{Irreducible representations and spherical harmonics} Given any representation, there are often orthogonal subspaces that do not interact with each other, making it possible to break our representation down into smaller pieces by restricting to these subspaces.
Hence, it is useful to consider the representations that cannot be broken down. Such representations are terms irreducible representations or irreps. Given a group $G$, $V$ a vector space, and $\rho: G \rightarrow GL(V )$  representation, a representation is irreducible if there is no nontrivial proper subspace $W \subset V$ such that
$ \rho|_W$ is a representation of $G$ over space $W$ . With each irrep there is an associated (harmonic) frequency $l$. The irreps of $SO(3)$ are given by the $(2l+1) \times (2l+1)$ dimensional rotation matrices called \textit{Wigner-D matrices}. The central columns of these matrices comprise the set of $2l+1$ \textit{spherical harmonics} $Y^{(l)}_m: \mathbb{S}^2 \rightarrow \mathbb{R}$, indexed by $m=-l, ..., l$.

\section{\texttt{Rapidash}: A Regular Group Convolution Approach for Flexible Equivariant Modeling}
\label{appendix:rapidash_architecture}

The \texttt{Rapidash} architecture, employed throughout our empirical study, builds upon the efficient $\SE(3)$-equivariant regular group convolution framework operating on position-orientation space ($\mathbb{R}^3 \times S^2$) as introduced by $\texttt{P}\Theta\texttt{NITA}$ \citep{bekkers2024fast}. To facilitate a comprehensive investigation into the utility of equivariance and symmetry breaking, \texttt{Rapidash} extends this foundation by incorporating several key flexibilities: (i) versatile handling of various input and output geometric quantities (e.g., scalars, vectors representing positions, normals, or pose information); (ii) enhanced scalability for large point clouds through multi-scale processing incorporating techniques like farthest point sampling; and (iii) convenient adaptability to different equivariance constraints, allowing for controlled comparisons between $\SE(n)$ and translation-only ($T_n$) equivariant models. Like $\texttt{P}\Theta\texttt{NITA}$, \texttt{Rapidash} primarily adopts the regular group convolution paradigm, distinguishing it from steerable G-CNNs or tensor field networks, although fundamental connections exist. This section elucidates the theoretical underpinnings of this approach.

\subsection{Regular vs. Steerable Group Convolutions}

Equivariant neural networks for $\SE(3)$ are often categorized into regular or steerable (tensor field) approaches \citep{weiler2019general}.
\begin{itemize}
    \item \textbf{Regular Group Convolutions:} These typically operate on multi-channel scalar fields defined over a group $G$ or a homogeneous space $X \equiv G/H$ (like $\mathbb{R}^3 \times S^2 \equiv \SE(3)/\SO(2)$ in our case). Feature fields $f: X \to \mathbb{R}^C$ transform via the regular representation: $[\rho(g)f](x) = f(g^{-1}x)$. Convolutions are then a form of template matching of a kernel $k(g_y^{-1}x)$ with the input signal \citep{cohen2016group, bekkers2019b}. A key advantage is that point-wise nonlinearities can be applied directly to these scalar feature maps without breaking equivariance \citep[cf.][Appx. A.1]{bekkers2024fast}.
    \item \textbf{Steerable Group Convolutions / Tensor Field Networks:} These operate on feature fields $f: \mathbb{R}^n \to V_\rho$ where the codomain $V_\rho$ is a vector space carrying a representation $\rho$ of $\SO(n)$ (often a sum of irreducible representations, irreps). Features transform via induced representations, affecting both the domain and codomain: $([\operatorname{Ind}^{SE(n)}_{SO(n)} \rho](g)f)(\mathbf{x}) = \rho(\mathbf{R}) \, f(g^{-1} \mathbf{x})$ \citep{weiler2021coordinate}. Kernels $k(\mathbf{x})$ must satisfy a steerability constraint $k(\mathbf{R} \mathbf{x}) = \rho_{out}(\mathbf{R}) k(\mathbf{x}) \rho_{in}(\mathbf{R}^{-1})$. While this allows for exact equivariance without discretizing the rotation group (if using irreps), applying nonlinearities typically requires specialized equivariant operations or transformations to a scalar basis, as standard element-wise activations on steerable features (vectors/tensors) can break equivariance \citep{weiler2019general}.
\end{itemize}

\subsection{Rapidash as a Regular Group Convolution and its Relation to Steerable Networks}

\texttt{Rapidash}, like $\texttt{P}\Theta\texttt{NITA}$ \citep{bekkers2024fast}, processes feature fields $f: \mathbb{R}^3 \times S^2 \to \mathbb{R}^C$. That is, at each point $\mathbf{x} \in \mathbb{R}^3$, it maintains a scalar signal $f_{\mathbf{x}}: S^2 \to \mathbb{R}^C$ defined over the sphere of orientations $S^2$. This aligns with the regular group convolution paradigm, in which the convolution kernel acts as an $\mathbb{R}^3 \times S^2$, subject to a symmetry constraint due to the quotient space structure $\mathbb{R}^3 \times S^2 \equiv SE(3) / SO(2)$. Specifically, the $SE(3)$ group convolution over $\mathbb{R}^3 \times S^2$ is of the form
\begin{equation}
[Lf](\mathbf{x},\mathbf{n}) = \iint k(\mathbf{R_n}^T (\mathbf{x}' - \mathbf{x}), \mathbf{R_n}^T \mathbf{n}') f(\mathbf{x}',\mathbf{n}') d\mathbf{x}'d\mathbf{n}',
\label{eq:gconv}
\end{equation}
with kernel constraint $\forall R_z \in SO(2): k(R \mathbf{x}, R \mathbf{n}) = k(\mathbf{x}, \mathbf{n})$ with $R_z$ a rotation around the $z$-axis. This symmetry constraint is automatically solved when conditioning message passing layers (such as convolution layers) on the invariants outlined in \citep[Thm.~1]{bekkers2024fast}. In terms of these invariants, the resulting discrete group convolution is given by
\begin{equation}
[Lf](\mathbf{x}_i,\mathbf{n}_i) = \sum_{j \in \mathcal{N}(i)} k(\mathbf{a}_{ij}) f(\mathbf{x}_j,\mathbf{n}_j) \, ,
\end{equation}
with the invariant pair-wise attributes given by
\begin{equation}
\begin{pmatrix}a^{(1)}_{ij} \\ a^{(2)}_{ij} \\ a^{(3)}_{ij}
\end{pmatrix} = 
\begin{pmatrix}\mathbf{n}^T_i (\mathbf{x}_j - \mathbf{x}_i) \\ \lVert \mathbf{n}_j \perp (\mathbf{x}_j - \mathbf{x}_i) \rVert \\ \mathbf{n}_i^T \mathbf{n}_i
\end{pmatrix} \, ,
\label{invariants}
\end{equation}
with $\perp$ denoting part of the vector $\mathbf{n}_j$ orthogonal to $\mathbf{x}_j - \mathbf{x}_i$.

As detailed in \citep[App A]{bekkers2024fast}, the connection between regular and steerable convolutions is established through Fourier analysis on the group/homogeneous space \citep{kondor2018clebsch, cesa2021program}. A scalar field $f_{\mathbf{x}}(\mathbf{n})$ over $S^2$ (as used in \texttt{Rapidash}/$\texttt{P}\Theta\texttt{NITA}$) can be decomposed into spherical harmonic coefficients (its Fourier transform) through a spherical Fourier transform. These coefficients for different spherical harmonic degrees correspond to features transforming under irreducible representations of $\SO(3)$, which are the building blocks of steerable tensor field networks.

\begin{proposition}[Equivalence via Fourier Transform]
\label{prop:regular_steerable_equivalence}
A regular group convolution operating on scalar fields $f: \mathbb{R}^n \times Y \to \mathbb{R}^C$ (where $Y$ is $S^{n-1}$ or $SO(n)$) can be equivalently implemented as a steerable group convolution operating on fields of Fourier coefficients $ \hat{f}: \mathbb{R}^n \to V_{\rho}$ (where $V_{\rho}$ is the space of combined irreducible representations) by performing point-wise Fourier transforms $\mathcal{F}_Y$ before the steerable convolution and inverse Fourier transforms $\mathcal{F}_Y^{-1}$ after.
\end{proposition}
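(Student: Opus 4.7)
The plan is to establish the equivalence by invoking the Peter--Weyl theorem, which decomposes $L^2(Y)$ into an orthogonal direct sum of isotypic components indexed by the irreducible representations of $\SO(n)$. For $Y = \SO(n)$ this decomposition is given by the Wigner-D matrix coefficients, while for $Y = S^{n-1} \equiv \SO(n)/\SO(n-1)$ it restricts to the spherical harmonic basis (those irrep components that are invariant under the stabilizer $\SO(n-1)$). I would fix notation $\mathcal{F}_Y : L^2(Y) \to \bigoplus_\ell V_\ell$ for this generalized Fourier transform, and recall the intertwining property $\widehat{\rho(h)g}^{\ell} = \rho_\ell(h)\,\hat{g}^{\ell}$, which is the statement that $\mathcal{F}_Y$ block-diagonalizes the regular representation into irreducibles $\rho_\ell$.

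First, I would apply $\mathcal{F}_Y$ point-wise in the spatial variable to the scalar field $f : \mathbb{R}^n \times Y \to \mathbb{R}^C$, producing the coefficient field $\hat{f} : \mathbb{R}^n \to V_\rho^C$ with $V_\rho = \bigoplus_\ell V_\ell$. Using the intertwining property, I would verify that the induced $SE(n)$-action on $f$ (which transforms both the base point and the $Y$-argument via the regular representation) is conjugated by $\mathcal{F}_Y$ into the induced representation from $\rho = \bigoplus_\ell \rho_\ell$, i.e., the standard transformation law for features in a steerable tensor-field network. This step identifies the codomain of $\hat{f}$ as a steerable feature space.

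Second, I would substitute the Fourier expansions of both $f$ and the convolution kernel $k$ into the regular group convolution (e.g., equation~\ref{eq:gconv}). Using the orthogonality of the basis and the fact that the $Y$-action on a basis element is implemented by the corresponding irreducible representation matrix, the integral over $Y$ collapses to finite tensor couplings of the irreducible components. The resulting operation is of the form $[\hat{L}\hat{f}](\mathbf{x}) = \int \hat{k}(\mathbf{x}' - \mathbf{x}) \star \hat{f}(\mathbf{x}')\, d\mathbf{x}'$, where $\star$ denotes the Clebsch--Gordan coupling between irreps: this is exactly the structure of a steerable convolution on $V_\rho$-valued feature fields. The kernel constraint coming from the quotient structure of $Y$ (e.g., $\SO(n-1)$-invariance when $Y = S^{n-1}$) translates term-by-term into the steerability constraint $\hat{k}(R\mathbf{x}) = \rho_{\mathrm{out}}(R)\,\hat{k}(\mathbf{x})\,\rho_{\mathrm{in}}(R^{-1})$. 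Conversely, given any steerable convolution on $\hat{f}$, applying $\mathcal{F}_Y^{-1}$ point-wise reconstructs a regular $G$-convolution on scalar fields, since $\mathcal{F}_Y$ is a unitary isomorphism.

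The main obstacle I expect is the bookkeeping in the third step: carefully showing that the coupling of $Y$-basis elements inside the convolution integral produces precisely the Clebsch--Gordan structure, and that the quotient-space symmetry of the regular kernel corresponds exactly to the irrep selection and steerability conditions on $\hat{k}$. For $Y = S^{n-1}$, additional care is needed because only $\SO(n-1)$-invariant vectors inside each $V_\ell$ participate, which restricts which irrep channels appear and how they couple; getting these multiplicities right is the delicate point. Since this equivalence is essentially classical and the paper attributes the detailed calculations to \citet[Appx. A.1]{bekkers2024fast}, \citet{brandstettergeometric}, and \citet{cesa2021program}, my proof would present the structural argument above and defer these coupling identities to those references.
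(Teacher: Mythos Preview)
Your proposal is correct and, in fact, considerably more detailed than what the paper provides. The paper does not prove Proposition~\ref{prop:regular_steerable_equivalence} at all: both in the main text and in Appendix~B.2 it states the proposition and immediately follows it with a Remark that defers entirely to \citet[Appx.~A.1]{bekkers2024fast}, \citet{brandstettergeometric}, and \citet{cesa2021program}. There is no argument given beyond the one-sentence pointer to these references.

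Your sketch via Peter--Weyl, the intertwining property of $\mathcal{F}_Y$, and the reduction of the $Y$-integral to Clebsch--Gordan couplings is the standard route and is exactly what those cited references carry out; so you are on the same track the paper implicitly endorses, just making the structure explicit. Your identification of the delicate point (the $\SO(n-1)$-invariant selection rule for $Y=S^{n-1}$ and the resulting multiplicity bookkeeping) is accurate, and your decision to defer those coupling identities to the same references mirrors the paper's own treatment. In short: nothing is wrong, and you have supplied more than the paper does.
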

\begin{remark}
This equivalence is discussed in depth by \citet[Appx. A.1]{bekkers2024fast}, \citet{brandstettergeometric}, and \citet{cesa2021program}. Consequently, \texttt{Rapidash} could, in principle, be reformulated as a tensor field network by operating in the spherical harmonic (Fourier) domain.
\end{remark}

\subsection{Universal Approximation}
\label{app:universal}
The universal approximation capabilities of equivariant networks are crucial. For steerable tensor field networks, \citet{dym2020universality} proved universal approximation properties for equivariant graph neural networks. Building on such results, and the correspondence between regular and steerable views, it has been shown that message passing networks (which include architectures like \texttt{Rapidash}) conditioned on appropriate invariant attributes over position-orientation space ($\mathbb{R}^n \times S^{n-1}$) are equivariant universal approximators.

\begin{corollary}[Universal Approximation for Rapidash]
\label{cor:rapidash_universal}
\texttt{Rapidash}, as an instance of message passing networks over $\mathbb{R}^3 \times S^2$ with message functions conditioned on the bijective invariant attributes (derived in \citet[Thm. 1]{bekkers2024fast}), is an $\SE(3)$-equivariant universal approximator.
\end{corollary}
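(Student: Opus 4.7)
The plan is to reduce the statement to the universal approximation result for message passing networks over $\mathbb{R}^3 \times S^2$ conditioned on the invariant attributes of \citep[Thm.~1]{bekkers2024fast}, namely \citep[Cor.~1.1]{bekkers2024fast}. The essential ingredients are (i) the bijectivity of the invariant encoding of relative position–orientation pairs, (ii) the Fourier-based equivalence between regular and steerable group convolutions (Proposition~\ref{prop:regular_steerable_equivalence}), and (iii) existing universality results for steerable tensor field networks.

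First, I would verify that each \texttt{Rapidash} layer fits the hypotheses of the cited message-passing framework: the layer updates a scalar feature field $f:\mathbb{R}^3 \times S^2 \to \mathbb{R}^C$ by aggregating messages of the form $k(\mathbf{a}_{ij})\,f(\mathbf{x}_j,\mathbf{n}_j)$, where $\mathbf{a}_{ij}$ are precisely the three invariants of~\eqref{invariants}. By \citep[Thm.~1]{bekkers2024fast}, these invariants form a bijective parametrization of pairs in $(\mathbb{R}^3 \times S^2) \times (\mathbb{R}^3 \times S^2)$ modulo the diagonal $\SE(3)$ action, so every continuous $\SE(3)$-invariant pairwise function factors through $\mathbf{a}_{ij}$. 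An MLP realizing $k$ can therefore uniformly approximate any such function on compact sets via the classical universal approximation theorem for feed-forward networks.

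Second, I would invoke Proposition~\ref{prop:regular_steerable_equivalence} to transfer the claim to the steerable setting. Under the spherical Fourier transform, feature fields on $\mathbb{R}^3 \times S^2$ correspond to stacks of irrep-valued fields on $\mathbb{R}^3$, and the regular group convolution becomes a steerable tensor field convolution of matching expressivity. Composed with the universality theorems for steerable GNNs \citep{dym2020universality} and related equivariant message-passing schemes \citep{villar2021scalars, gasteiger2021gemnet}, the layered \texttt{Rapidash} architecture is rich enough to approximate any continuous $\SE(3)$-equivariant map between point-cloud feature fields. The conclusion then follows by directly applying \citep[Cor.~1.1]{bekkers2024fast}, which packages the above into a universal approximation statement for networks of exactly this form.

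The main obstacle in a fully self-contained argument would be showing that \texttt{Rapidash}'s extra architectural ingredients---multi-scale processing via farthest point sampling and the embedding of vector-valued inputs and outputs as spherical signals---do not reduce the approximation class. I would handle this by noting that the down- and up-sampling operators between discretizations of the continuous $\mathbb{R}^3 \times S^2$ domain converge to the identity as resolution increases and are themselves equivariant, and that the spherical embedding of a vector input is a continuous equivariant injection whose left inverse is realized by a first-order spherical-harmonic projection. Hence neither operation restricts the limit function class, and universality of the underlying message-passing core carries over to the full \texttt{Rapidash} model.
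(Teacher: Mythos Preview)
Your proposal is correct and follows essentially the same route as the paper: both reduce the claim directly to \citep[Cor.~1.1]{bekkers2024fast}, invoking the bijectivity of the invariants from \citep[Thm.~1]{bekkers2024fast}, the regular--steerable equivalence of Proposition~\ref{prop:regular_steerable_equivalence}, and the universality results of \citep{dym2020universality, villar2021scalars, gasteiger2021gemnet}. In fact, you go further than the paper---which only records a one-sentence remark pointing to \citep[Cor.~1.1]{bekkers2024fast}---by explicitly addressing why the multi-scale sampling and vector-to-spherical embeddings do not shrink the approximation class.
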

\begin{remark}
This follows from \citet[Cor. 1.1]{bekkers2024fast}, which itself builds on universality results for steerable GNNs \citep{dym2020universality} and for invariant networks used to construct equivariant functions \citep{villar2021scalars, gasteiger2021gemnet}. The key is that feature maps over $\mathbb{R}^3 \times S^2$ are sufficiently expressive.
\end{remark}

\subsection{Advantages of the Regular Group Convolution Viewpoint for Rapidash}
\texttt{Rapidash} adopts the regular group convolution viewpoint, working with scalar signals on discretized spherical fibers ($f(\mathbf{x}, \mathbf{n}_k)$ where $\mathbf{n}_k$ are grid points on $S^2$). This offers practical advantages:

\begin{enumerate}
    \item \textbf{Simplicity of Activation Functions:} Since the features $f(\mathbf{x}, \mathbf{n}_k)$ at each grid point are scalars (or vectors of scalars in the channel dimension), standard element-wise nonlinear activation functions (e.g., GELU, ReLU, SiLU) can be applied directly without breaking $\SE(3)$-equivariance. This is because the action of $g \in \SE(3)$ permutes these values on the fiber or spatially, but the activation acts on each scalar value independently. In contrast, steerable tensor field networks require specialized equivariant nonlinearities or norm-based activations on higher-order tensors to preserve equivariance \citep{weiler2019general}, which can limit expressivity or introduce computational overhead.
    \item \textbf{Computational Efficiency of Activations:} While steerable networks \textit{can} apply scalar activations by first performing an inverse Fourier transform (to get scalar fields), applying the activation, and then a forward Fourier transform (back to irreps), this incurs significant computational cost at each nonlinearity \citep[cf.][Appx. A.1]{bekkers2024fast}. By operating directly on scalar spherical signals, \texttt{Rapidash} avoids these repeated transformations. Previous work on steerable group convolutions has indeed found that element-wise activation functions applied to scalar fields (obtained via inverse Fourier transforms from steerable vector features) can be most effective \citep[e.g., as implicitly done in some equivariant GNNs by taking norms or scalar products before activation, or as explicitly discussed for general steerable CNNs in][]{weiler2019general}. 
    \item \textbf{Conceptual Simplicity:} The regular group convolution approach, involving template matching of kernels over signals on $G/H$, can be more intuitive and closer to standard CNN paradigms than navigating representation theory and Clebsch-Gordan tensor products often required for constructing steerable tensor field networks. Concepts like stride/sub-sampling and normalization layers readily transfer to this setting.
\end{enumerate}
While discretizing the sphere $S^2$ introduces an approximation to full $\SO(3)$ equivariance (equivariance up to the grid resolution), empirical results, including those for $\texttt{P}\Theta\texttt{NITA}$ \citep{bekkers2024fast} and our findings with \texttt{Rapidash}, demonstrate that this is not detrimental to achieving state-of-the-art performance and robust generalization.

\begin{wrapfigure}{r}{0.33\textwidth} 
  \centering
  \includegraphics[width=\linewidth]{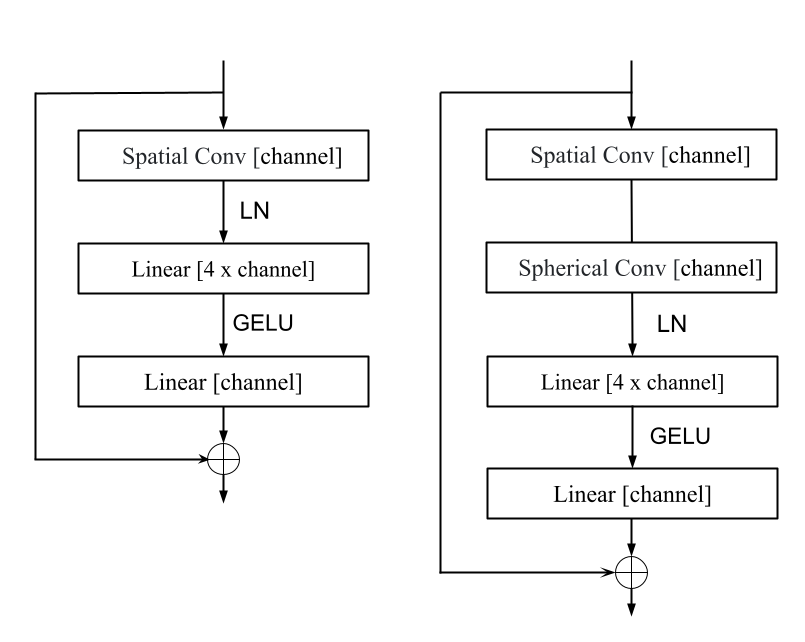}
  \caption{Block design with base space $\mathbb{R}^3 \times S^2$.}
  \label{fig:rapidash_layer_r3s2}
\end{wrapfigure}

\subsection{Separable group convolutions}
\label{app:separable}
Regular group convolutions over the full spacew $SE(3)$ can be efficiently computed when the kernel is factorized via
$$
k_{c'c}(\mathbf{x},\mathbf{R})=k_{c}^{\mathbb{R}^3}(\mathbf{x}) k_{c}^{\SO(3)}(\mathbf{R}) k_{c'c}^{(channel)} \, ,
$$
with $c,c'$ the row and column indices of the "channel mixing" matrix. Then the group convolution equation can be split into three steps that are each efficient to compute: a spatial interaction layer (message passing), a point-wise $\SO(3)$ convolution, and a point-wise linear layer \citep{knigge2022exploiting,kuipers2023regular}. It would result in the group convolutional counterpart of \textit{depth-wise separable convolution} \citep{chollet2017xception}, which separates convolution in two steps (spatial mixing and channel mixing). In particular, for our choice to the the group convolution over $\mathbb{R}^3 \times S^2$ using the pair-wise invariants of \eqref{invariants} the kernel is parametrized as
$$
k_{c'c}(\mathbf{a}_{ij})=k_{c}^{\mathbb{R}^3}(a^{(1)}_{ij}, a^{(2)}_{ij}) k_{c}^{S^2}(a^{(3)}_{ij}) k_{c'c}^{(channel)} \, .
$$
This form allows to split the convolution over several steps, which following \citep{bekkers2024fast} we adapt a ConvNext \citep{liu2022convnet} as a the main layer to parametrize \texttt{Rapidash}. Fig.~\ref{fig:rapidash_layer_r3s2} shows the the steps performed in this block, relative a standard ConvNext block over position space only. Here LN denote layer norm and GELU is used as activation function.

\section{Sources of Equivariance Breaking}
\label{appendix:equivariance_breaking} 

In the study of equivariant deep learning, it is important to distinguish between different notions of "breaking" symmetries. Some approaches, such as those explored by \citet{lawrence2025improving} or the pose-conditioning methods analyzed in Appendix~\ref{app:symmetrybreaking}, utilize architectures that maintain specific (joint) equivariance properties to achieve "symmetry breaking" in the output (e.g., an output sample being less symmetric than the input, or overcoming the limitations of standard invariance). This section, in contrast, focuses on mechanisms by which the strict $G$-equivariance of a neural network architecture itself can be compromised or intentionally relaxed, leading to a model that no longer fully adheres to the mathematical definition of $G$-equivariance. We categorize these into \textit{external} and \textit{internal} sources of equivariance breaking.

\subsection{External Equivariance Breaking}
\label{appendix:external_equivariance_breaking}
External equivariance breaking occurs when an inherently $G$-equivariant architecture loses its equivariance properties due to the way inputs are provided to the network. Consider a linear layer $L$ designed to be $G$-equivariant (e.g., for $G = \SE(3)$). Let $v$ be a vector input that transforms naturally under the group action, and define $x_g = g \cdot v$ as its transformed version, described in global coordinates. When these coordinates $x_g$ are provided, for example, as a set of scalar triplets rather than as a geometric vector type that the layer is designed to process equivariantly, they may be treated independently by the network without regard to their collective transformation under $G$. In such cases, we have:
\begin{equation}
    \text{if } x \neq x_g, \text{ then often } L(x_g) \neq g \cdot L(x) \text{ (or } L(x_g) \neq L(x) \text{ for invariant L)}
\end{equation}
This means the network processes transformed inputs in a way that does not respect the group symmetry, thereby breaking the intended $G$-equivariance of the function $L$. In contrast, when inputs are specified as types that transform appropriately under the group action (e.g., as vectors for an $SE(3)$-equivariant layer that expects vector inputs), equivariance can be maintained:
\begin{equation}
    L(g \cdot v) = g \cdot L(v) \quad \forall g \in G
\end{equation}
Moreover, for features that are truly invariant under $G$ (such as one-hot encodings of atom types in QM9, which are invariant to $SO(3)$ rotations), the group action is trivial:
\begin{equation}
    g \cdot x_{inv} = x_{inv} \quad \implies \quad L(g \cdot x_{inv}) = L(x_{inv})
\end{equation}
This ensures that processing these features does not break the network's overall desired invariance to group transformations of other, non-invariant inputs.

\subsection{Internal Equivariance Breaking}
\label{appendix:internal_equivariance_breaking}

Internal equivariance breaking refers to the deliberate relaxation or incorrect specification of equivariance constraints within the layers themselves. Even if inputs are provided correctly, the layer operations may not fully respect the symmetry group $G$. Recent works have explored various approaches to controlled relaxation, including \citep{wang2022approximatelyequivariantnetworksimperfectly}:
\begin{itemize}
    \item Basis decomposition methods that mix $G$-equivariant and non-$G$-equivariant components within a layer.
    \item Learnable deviations from strict $G$-equivariance, often controlled by regularization terms.
    \item Progressive relaxation or enforcement of $G$-equivariance constraints during the training process.
\end{itemize}
In some cases, an internally broken layer $L_{\text{broken}}$ might be expressed as a combination of a strictly $G$-equivariant part and a non-$G$-equivariant part:
\begin{equation}
    L_{\text{broken}}(X) = L_{\text{equiv}}(X) + \alpha L_{\text{non-equiv}}(X)
\end{equation}
where $\alpha$ controls the degree of deviation from strict $G$-equivariance. Some approaches (e.g., \citep{wang2022approximatelyequivariantnetworksimperfectly}) implement schemes where $\alpha$ is annealed during training.

\subsection{Interplay of Equivariance Breaking Mechanisms}
In practice, both external and internal equivariance-breaking can occur, sometimes simultaneously, and their effects can interact. For example, a network might employ layers with relaxed internal constraints (e.g., non-stationary filters) while also processing geometric inputs (like coordinates) in a manner that externally breaks the intended global symmetry (e.g., treating them as independent scalar features). The overall adherence of the model to a specific group symmetry $G$ then depends on the interplay of these factors. As noted by \citet{petrache2023approximationgeneralizationtradeoffsapproximategroup}, while aligning data symmetry with architectural symmetry (strict equivariance) is ideal, models with partial or approximate equivariance can still exhibit improved generalization compared to fully non-equivariant ones.

\subsection{Relevance to Architectural Choices in Our Study} 
While our study primarily investigates scenarios of \textit{external equivariance breaking} (e.g., how providing coordinate information as scalars versus vectors impacts overall $SE(3)$ equivariance, cf. the red exclamation marks in our tables), some of our architectural comparisons also touch upon concepts related to the scope of equivariance. For instance, choosing to build a model with strictly $T_3$-equivariant layers (translation-equivariant only, Tab. \ref{tab:shapenet}, rows 15-18) instead of $SE(3)$-equivariant layers (Table \ref{tab:qm9_property}, rows 1-4) results in a model that is not $SE(3)$-equivariant overall. This is an architectural design choice selecting a different, less encompassing symmetry group, rather than starting with an $SE(3)$-layer and internally relaxing its $SE(3)$ constraints.

Our experiments demonstrate that carefully managing how equivariance is implemented or broken, whether externally through input representation or by choosing specific architectural symmetry properties, can significantly impact model performance, especially when the underlying data exhibits only approximate symmetries or when task-relevant information is tied to a canonical orientation. This aligns with recent findings showing benefits from various approaches to relaxed or modified equivariance constraints \citep{vanderouderaa2022relaxingequivarianceconstraintsnonstationary, pmlr-v202-kim23p, pertigkiozoglou2024improvingequivariantmodeltraining}. Further theoretical analysis of how explicitly providing pose information can be seen as a principled modification of standard invariance is presented in Appendix~\ref{app:symmetrybreaking}.

\subsection{Relationship to Our Work}
While our study primarily focuses on external symmetry breaking (cf. the red exclamation marks in our tables), we note that the transition between translation-equivariant layers (Table \ref{tab:shapenet}, rows 15-18) and roto-translation equivariant layers (Table \ref{tab:qm9_property}, rows 1-4) could be viewed as a form of internal symmetry breaking. However, we distinguish our approach from explicit internal symmetry-breaking methods as we do not implement continuous relaxation of equivariance constraints within layers, but rather compare discrete architectural choices with different symmetry properties.

\section{Analysis of Symmetry Breaking and Pose-Conditioning in Equivariant Models}
\label{app:symmetrybreaking}

\subsection{Standard Invariance and Effective Pose Entropy} 
\label{app:prop_max_entropy_pose}

This section formalizes the notion that a standard $SO(3)$-invariant model, when viewed within the framework of a jointly invariant function $f({X},Z)$, operates as if the auxiliary pose variable $Z$ carries no specific information, i.e., it corresponds to a maximum entropy distribution over poses.

\begin{proposition}[Standard $SO(3)$-Invariance as Maximum Effective Pose Entropy]
\label{prop:max_entropy_pose}
Let $f({X},Z)$ be a function $f: \mathcal{X} \times SO(3) \to {Y}$ that is jointly $SO(3)$-invariant, meaning $f(g{X}, gZ) = f({X},Z)$ for all $g \in SO(3)$.
If the output of $f({X},Z)$ is also required to be standard $SO(3)$-invariant with respect to ${X}$ alone, defining a function $f_{inv}({X})$ such that $f_{inv}(g{X}) = f_{inv}({X})$ for all $g \in SO(3)$, then:
\begin{enumerate}
    \item The function $f({X},Z)$ must be independent of the auxiliary pose variable $Z$. Specifically, $f({X},Z) = f({X},Id)$ for any $Z \in SO(3)$, where $Id$ is the identity element in $SO(3)$.
    \item Consequently, such a model $f({X},Z)$ (which produces $f_{inv}({X})$) behaves as if $Z$ is drawn from an uninformative, maximum entropy distribution (e.g., the uniform distribution over $SO(3)$). The model cannot utilize any specific orientation information conveyed by $Z$.
\end{enumerate}
\end{proposition}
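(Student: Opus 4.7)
The plan is to derive both conclusions from a direct algebraic manipulation of the two invariance hypotheses, then interpret the resulting $Z$-independence through the SymPE conditional distribution perspective.

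First, I would make the two hypotheses precise. The joint invariance gives
\begin{equation}
f(gX, gZ) = f(X,Z) \quad \text{for all } g \in SO(3), \; X \in \mathcal{X}, \; Z \in SO(3).
\end{equation}
The requirement that $f(X,Z)$ also be standard $SO(3)$-invariant in $X$ alone means that for every fixed $Z$ the function $X \mapsto f(X,Z)$ is $SO(3)$-invariant, i.e.\
\begin{equation}
f(gX, Z) = f(X,Z) \quad \text{for all } g \in SO(3), \; X \in \mathcal{X}, \; Z \in SO(3).
\end{equation}

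Second, for claim (1) I would combine the two identities. Chaining them gives
\begin{equation}
f(gX, gZ) \;=\; f(X,Z) \;=\; f(gX, Z),
\end{equation}
and setting $Y := gX$ (which ranges over all of $\mathcal{X}$ as $g$ and $X$ vary) yields $f(Y, gZ) = f(Y, Z)$ for every $g \in SO(3)$. Since the orbit of any $Z$ under left-multiplication by $SO(3)$ is all of $SO(3)$, this says $f(Y,\cdot)$ is constant on $SO(3)$; in particular $f(X,Z) = f(X, \mathrm{Id})$ for every $Z$. This is the first conclusion, and it requires no calculation beyond the two substitutions.

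Third, for claim (2) I would reinterpret the $Z$-independence probabilistically. In the SymPE viewpoint a model consumes $Z \sim \mathbb{P}(Z\mid X)$, and the effective predictor is the marginal $\mathbb{E}_{Z\sim\mathbb{P}(\cdot\mid X)}[f(X,Z)]$. Having shown $f(X,Z) = f(X,\mathrm{Id})$, this marginal collapses to $f(X,\mathrm{Id})$ regardless of $\mathbb{P}(Z\mid X)$; in particular it coincides with the marginal under the Haar-uniform distribution on $SO(3)$, which is the unique maximum-entropy distribution on the compact group. Thus any two conditional pose laws---including the maximum entropy one---are indistinguishable at the output, and the model cannot extract information from $Z$. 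I would also note, in a short remark, that the uniform law is the unique $SO(3)$-bi-invariant distribution, so it is in fact the natural fixed point of the equivariance constraint, strengthening the informal "as if $Z$ were uniform" reading.

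The only potential obstacle is making the interpretive step in (2) sound rather than merely suggestive: one must be clear that it is the \emph{coincidence of output under every $\mathbb{P}(Z\mid X)$}, not an entropy calculation of some learned distribution, that justifies the max-entropy phrasing. Beyond that, the argument is a two-line group-theoretic manipulation, so I do not anticipate any technical difficulty.
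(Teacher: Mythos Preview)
Your proposal is correct and follows essentially the same approach as the paper. The paper's proof sketch picks the specific element $g = Z^{-1}$ in the joint-invariance identity to land directly at $f(X,Z) = f(Z^{-1}X,\mathrm{Id})$ and then applies standard invariance, whereas you chain the two invariances for general $g$ and invoke transitivity of left-multiplication on $SO(3)$; these are the same two-step manipulation organized in a different order, and your treatment of part (2) is somewhat more explicit than the paper's informal paragraph.
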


\begin{proof}[Proof Sketch]
To demonstrate part 1, that $f({X},Z) = f({X},Id)$:
\begin{enumerate}
    \item By the joint $SO(3)$-invariance of $f$, we have $f({X},Z) = f(Z^{-1}{X}, Z^{-1}Z) = f(Z^{-1}{X}, Id)$.
    \item The condition that the output of $f({X},Z)$ is standard $SO(3)$-invariant with respect to ${X}$ means that for any fixed second argument (like $Id$), the function $f(\cdot, Id)$ must be $SO(3)$-invariant in its first argument. That is, $f(g{X}, Id) = f({X}, Id)$ for all $g \in SO(3)$.
    \item Applying this standard $SO(3)$-invariance with $g = Z^{-1}$ to the expression $f(Z^{-1}{X}, Id)$, we get $f(Z^{-1}{X}, Id) = f({X}, Id)$.
    \item Combining steps 1 and 3: $f({X},Z) = f(Z^{-1}{X}, Id) = f({X}, Id)$.
\end{enumerate}
This establishes that $f({X},Z)$ is independent of $Z$.

For part 1, if $f({X},Z)$ is independent of $Z$, it cannot make use of any particular value of $Z$ to alter its output. From an informational perspective, $Z$ provides no specific information to the model. This behavior is equivalent to $Z$ being drawn from a distribution that reflects maximal uncertainty about the pose, which for a compact group like $SO(3)$ is the uniform (Haar) measure, corresponding to maximum entropy. Thus, the model $f_{inv}({X})$ cannot utilize any specific orientation information that might be notionally carried by $Z$.
\end{proof}

\subsection{Proof of Corollary~\ref{cor:expressivity_gain} (Expressivity Gain from Low-Entropy Pose Information)}
\label{app:proof_corollary_expressivity}

\textbf{Corollary~\ref{cor:expressivity_gain}} states: \textit{Let the optimal invariant mapping for a task, $f^*(X, R)$, depend non-trivially on canonical orientation $R$. Based on the distinct informational content of $Z$ outlined above: (a) Standard invariant models $f_{inv}(X)$ (maximum effective pose entropy) lack the expressivity to represent $f^*$; and (b) Pose-conditioned models $f_{cond}(X, R)$ (provided with zero-entropy pose information) \textit{can} represent $f^*$.}

Let $G=SO(3)$ be the group of orientations. The optimal mapping $f^*: \mathcal{X} \times G \to \mathcal{Y}$ (where $\mathcal{X}$ is the space for $X$ and $\mathcal{Y}$ for the output) has two key properties:
\begin{enumerate}
    \item \textbf{Joint Invariance for an Invariant Task:} $f^*(gX, gR) = f^*(X,R)$ for all $g \in G$.
    \item \textbf{Non-trivial Dependence on $R$:} For any given $X \in \mathcal{X}$, there exist $R_1, R_2 \in G$ such that $R_1 \neq R_2$ but $f^*(X,R_1) \neq f^*(X,R_2)$. This implies that $R$ is an essential input for determining the output of $f^*$, not merely a redundant pose of $X$ that could be factored out by invariance.
\end{enumerate}

\textbf{Proof of 1: Standard invariant models $f_{inv}(X)$ lack the expressivity to represent $f^*$.}

A standard $G$-invariant model $f_{inv}(X)$ is a function $f_{inv}: \mathcal{X} \to \mathcal{Y}$. By definition, its output depends solely on $X$ and must satisfy $f_{inv}(gX) = f_{inv}(X)$ for all $g \in G$. For any specific input $X_0 \in \mathcal{X}$, $f_{inv}(X_0)$ produces a single, uniquely determined output value, say $Y_{inv\_0}$.

The model $f_{inv}(X)$ does not take $R$ as an explicit input. As established by the principle that standard invariant models operate with maximum effective entropy regarding an auxiliary pose variable (see Appendix~\ref{app:prop_max_entropy_pose}, Proposition~\ref{prop:max_entropy_pose}), $f_{inv}(X)$ cannot vary its output based on different values of $R$ for a fixed $X$. Its output is fixed once $X$ is fixed.

Now, consider the target function $f^*(X_0, R)$. According to property (2) of $f^*$, there exist distinct $R_1, R_2 \in G$ such that $f^*(X_0, R_1) = Y_1^*$ and $f^*(X_0, R_2) = Y_2^*$, where $Y_1^* \neq Y_2^*$.

If $f_{inv}(X)$ were to represent $f^*(X,R)$, then for the input $X_0$, it would need to output $Y_1^*$ when the canonical orientation is $R_1$, and simultaneously output $Y_2^*$ when the canonical orientation is $R_2$. However, $f_{inv}(X_0)$ can only produce its single output $Y_{inv\_0}$. Since $Y_1^* \neq Y_2^*$, $Y_{inv\_0}$ cannot equal both. Therefore, $f_{inv}(X)$ cannot represent the function $f^*(X,R)$ due to its inability to differentiate its output based on $R$.

\textbf{Proof of 2: Pose-conditioned models $f_{cond}(X, R)$ \textit{can} represent $f^*$.}

A pose-conditioned model $f_{cond}(X,R)$ is a function $f_{cond}: \mathcal{X} \times G \to \mathcal{Y}$. It explicitly takes both $X$ and $R$ as inputs. It is designed to satisfy the same joint $G$-invariance as the target: $f_{cond}(gX, gR) = f_{cond}(X,R)$.

The target function $f^*(X,R)$ is a specific function mapping from the domain $\mathcal{X} \times G$ to $\mathcal{Y}$ that adheres to this joint $G$-invariance.
We assume that the class of models from which $f_{cond}(X,R)$ is drawn (e.g., sufficiently large neural networks architected to respect joint $G$-invariance) are universal approximators for continuous functions on $\mathcal{X} \times G$ that satisfy the given symmetry requirements.

Since $f_{cond}(X,R)$ takes $R$ as an explicit input (representing zero-entropy pose information for that instance, as $R$ is known and fixed), it has the architectural capacity to make its output depend on $R$. For a given $X_0$, the model $f_{cond}(X_0, R)$ can learn to produce different outputs for different values of $R$. Specifically, it can learn to output $f^*(X_0, R_1)$ when its input $R$ is $R_1$, and $f^*(X_0, R_2)$ when its input $R$ is $R_2$.

Given that $f^*(X,R)$ is a well-defined function of $(X,R)$ satisfying the joint $G$-invariance, and $f_{cond}(X,R)$ is a universal approximator for such functions with access to both $X$ and $R$, $f_{cond}(X,R)$ can represent $f^*(X,R)$. 

\subsection{On Symmetry Breaking in ShapeNet Segmentation}
\label{app:symmetry_breaking_shapenet}

Providing explicit canonical pose information $R$ to our segmentation models acts as a crucial form of symmetry breaking. This allows models to overcome limitations inherent in standard $SO(3)$-equivariant networks, $f_{std\_eq}(X)$, which operate solely on an input point cloud $X$. Such standard models are bound by Curie's Principle, meaning their output segmentation must respect any symmetries present in $X$. Consequently, if $X$ possesses exact internal symmetries (e.g., a $C_4$-symmetric table) or even strong approximate or coarse-level symmetries (e.g., an airplane viewed broadly as a cross-shape), $f_{std\_eq}(X)$ can struggle to distinguish parts that are (nearly) equivalent under these symmetries but are semantically distinct with respect to a canonical frame (like specific table legs or differentiating "top" from "bottom" on an airplane). These (approximate) symmetries can lead to ambiguities that $f_{std\_eq}(X)$ cannot resolve using only the information in $X$.

By contrast, a pose-conditioned model, $f_{cond\_eq}(X,R)$, receives $X$ alongside its known canonical pose reference $R$ (e.g., $R=I$ for aligned ShapeNet data). The model's joint $SO(3)$-equivariance now applies to the effective input pair $(X,R)$. Critically, the specific orientation $R$ typically makes this pair $(X,R)$ maximally asymmetric with respect to $SO(3)$ transformations (i.e., its symmetry group $G_{(X,R)}$ becomes trivial, $\{I\}$), even if $X$ itself has notable exact or approximate symmetries. Because the effective input is asymmetric, $f_{cond\_eq}(X,R)$ is no longer constrained by $X$'s original symmetries and can produce segmentations that make distinctions based on $R$. For instance, it can learn to identify the "front-left leg" of a symmetric table or the "upper surface" of an airplane wing, as $R$ provides the necessary external frame to dissolve these ambiguities.

This ability to generate more specific, $R$-conditioned segmentations by resolving ambiguities arising from $X$'s exact or approximate symmetries represents a significant expressivity gain, as supported by the logic of `Corollary~\ref{cor:expressivity_gain}` (adapted for equivariant outputs). The model $f_{cond\_eq}(X,R)$ can represent optimal segmentation functions $f^*_{seg}(X,R)$ that depend on the canonical frame $R$, which $f_{std\_eq}(X)$ cannot. This deterministic disambiguation via $R$ allows the model to overcome symmetry-induced limitations, distinct from, yet achieving a similar practical outcome to, the probabilistic symmetry breaking described by \citet{lawrence2025improvingequivariantnetworksprobabilistic}.

\subsection{Symmetry Breaking in Generative Modeling of ShapeNet Objects}
\label{app:symmetry_breaking_generative}

Generative modeling for ShapeNet aims to map a symmetric base distribution $p_0(\mathbf{z})$ (e.g., $SO(3)$-invariant Gaussian noise) to the target distribution $p_1(\mathbf{x})$ of canonically aligned ShapeNet objects. This target $p_1(\mathbf{x})$ is not $SO(3)$-invariant. Standard $SO(3)$-equivariant generative processes would inherently preserve the $SO(3)$-symmetry of $p_0(\mathbf{z})$ throughout the generation. By Curie's Principle, an equivariant map cannot produce a less symmetric output distribution from a more symmetric input distribution. Thus, a purely $SO(3)$-equivariant denoising function $D_\theta(\mathbf{x}_t, t)$ (or associated score, or flow \citep{karras2022elucidatingdesignspacediffusionbased}) would generate an $SO(3)$-invariant distribution of shapes, failing to match the aligned ShapeNet data.

To resolve this, symmetry must be broken. This is achieved by conditioning the denoising model on explicit pose information $R$, yielding $D_\theta(\mathbf{x}_t, t, R)$. For generating aligned ShapeNet objects, $R$ is fixed to a canonical pose (e.g., $R=I$). The model $D_\theta(\mathbf{x}_t, t, R)$ is designed for joint $SO(3)$-equivariance (i.e., $D_\theta(g\mathbf{x}_t, t, gR) = g D_\theta(\mathbf{x}_t, t, R)$).

By providing a fixed $R_{target}$ (e.g., $R=I$), the effective input to the jointly equivariant model becomes $(\mathbf{x}_t, R_{target})$. This pair can be considered maximally asymmetric (its symmetry group $G_{(\mathbf{x}_t, R_{target})}$ is often trivial, as discussed in Appendix~\ref{app:symmetry_breaking_shapenet}). Consequently, the model can map potentially symmetric noise $\mathbf{x}_t$ towards samples $\mathbf{x}_0$ specifically aligned with $R_{target}$. The fixed pose $R_{target}$ breaks the initial $SO(3)$-symmetry, enabling the generation of the non-$SO(3)$-invariant target distribution. This use of pose $R$ aligns with the principle from \citet{lawrence2025improvingequivariantnetworksprobabilistic} that auxiliary information can enable equivariant networks to produce outputs less symmetric than their primary inputs.

\subsection{Generalization Benefits of Joint Invariance in Pose-Conditioned Models}
\label{app:theory_generalization}

While App.~\ref{app:symmetrybreaking} established the expressivity advantage of pose-conditioned invariant models $f_{cond}(X, R)$ over standard invariant models $f_{inv}(X)$, one might consider if a sufficiently complex non-equivariant model $f_{neq}(X, R)$, which also takes pose $R$ as input but lacks structural symmetry constraints, could achieve similar performance. Assuming both $f_{cond}(X, R)$ (satisfying joint invariance $f(gX, gR)=f(X, R)$) and $f_{neq}(X, R)$ have sufficient capacity to represent the optimal invariant solution $f^*(X, R)$, we argue that the equivariant structure of $f_{cond}$ provides superior generalization guarantees.

This argument leverages theoretical results on the generalization benefits derived from incorporating known symmetries, such as Theorem 6.1 presented by \citet{lawrence2025improving}, which extends foundational work on equivariance and generalization \citep{elesedy2021provably}.

\begin{proposition}[Generalization Advantage of Jointly Invariant Pose-Conditioned Models]
\label{prop:generalization_pose}
Let $X$ be drawn from a $G$-invariant distribution $\mathbb{P}(X)$, where $G=SO(3)$. Let $R$ be the provided pose, treated as an auxiliary variable $Z=R$ with the equivariant conditional distribution $\mathbb{P}(Z|X) = \delta(Z-R)$. Consider two models predicting an invariant output $Y$ using inputs $(X, R)$:
\begin{itemize}
    \item $f_{cond}(X, R)$: A model satisfying joint invariance, $f_{cond}(gX, gR) = f_{cond}(X, R)$.
    \item $f_{neq}(X, R)$: Any model using the same inputs $(X, R)$ that does \textbf{not} satisfy joint invariance.
\end{itemize}
Under suitable regularity conditions (as specified in \citet{lawrence2025improving}, Thm 6.1) and for a suitable risk function $R(f) = \mathbb{E}[ \mathcal{L}(f(X, R), Y) ]$ (e.g., using $L_2$ loss), the expected risk of the non-equivariant model is greater than or equal to that of the jointly invariant model:
$$ R(f_{neq}) \ge R(f_{cond}) $$
The difference $R(f_{neq}) - R(f_{cond})$ represents a non-negative generalization gap attributable to the component of $f_{neq}$ orthogonal to the space of jointly invariant functions.
\end{proposition}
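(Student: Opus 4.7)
The plan is to prove Proposition~\ref{prop:generalization_pose} by specializing Theorem 6.1 of \citet{lawrence2025improvingequivariantnetworksprobabilistic}, which itself extends the $L^2$-generalization-gap theorem of \citet{elesedy2021provably} to the setting with auxiliary equivariant variables $Z$. The argument proceeds in three conceptual steps built around an orthogonal projection onto the jointly invariant subspace.

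First, I would set up the Hilbert space $L^2(\mathcal{X} \times \SO(3), \mu)$, where $\mu$ is the joint law of $(X, R)$ induced by $\mathbb{P}(X)\,\mathbb{P}(Z|X)$, and define the Reynolds operator $Q$ via $(Qf)(X,R) = \int_G f(gX,gR)\,dg$, with $dg$ the normalized Haar measure on $G=\SO(3)$. The key structural point is that $\mu$ is invariant under the diagonal action $(X,R)\mapsto (gX,gR)$: this follows because $\mathbb{P}(X)$ is $G$-invariant by hypothesis, and because the conditional $\mathbb{P}(Z|X) = \delta(Z-R)$ is equivariant in the sense required by Lawrence et al.\ (the delta shifts covariantly with $X$ under the diagonal action). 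This invariance makes $Q$ an orthogonal projection of $L^2(\mu)$ onto its closed subspace $\mathcal{I}$ of jointly invariant functions, with orthogonal complement $\mathcal{I}^{\perp}$ corresponding to the \emph{anti-symmetric} (non-invariant) component.

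Second, I would decompose any candidate $f_{neq} \in L^2(\mu)$ as $f_{neq} = Qf_{neq} + (I-Q)f_{neq}$, where $Qf_{neq}\in\mathcal{I}$ and $(I-Q)f_{neq}\in\mathcal{I}^{\perp}$. Since the task is invariant, the optimal Bayes regressor $f^*$ is jointly invariant and thus lies in $\mathcal{I}$. For $L^2$ loss, applying the Pythagorean identity with respect to the orthogonal splitting $L^2(\mu) = \mathcal{I}\oplus\mathcal{I}^{\perp}$ yields the clean decomposition
\begin{equation}
R(f_{neq}) \;=\; R(Qf_{neq}) \;+\; \|(I-Q)f_{neq}\|_{L^2(\mu)}^2.
\end{equation}
Because $Qf_{neq}$ is itself a jointly invariant model, and $f_{cond}$ in the proposition is \emph{any} such model (in particular, one that minimizes risk within the jointly invariant class under equal capacity), we obtain $R(f_{cond}) \leq R(Qf_{neq}) \leq R(f_{neq})$. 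The non-negative gap $\|(I-Q)f_{neq}\|_{L^2(\mu)}^2$ is exactly the norm of the non-invariant component of $f_{neq}$, vanishing iff $f_{neq}$ is already jointly invariant.

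The main obstacle is verifying the measure-theoretic prerequisite that $\mu$ is invariant under the diagonal action. This is delicate because $R$ is treated as a deterministic, observed pose rather than a random variable with a diffuse conditional, so one must carefully interpret the delta conditional $\mathbb{P}(Z|X) = \delta(Z-R)$ as encoding equivariance of the joint law over $(X,R)$-pairs rather than as an ordinary conditional density. Once this point is cleared --- exactly the technical condition addressed by \citet[Thm.~6.1]{lawrence2025improvingequivariantnetworksprobabilistic} --- the remainder reduces to the standard projection-based generalization-gap computation, and strict inequality whenever $f_{neq}$ has non-trivial anti-symmetric part follows immediately.
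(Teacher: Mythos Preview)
Your proposal is correct and follows essentially the same route as the paper: both reduce the proposition to Theorem~6.1 of \citet{lawrence2025improvingequivariantnetworksprobabilistic}, with you additionally unpacking the Elesedy-style Reynolds-projection argument that underlies that theorem, whereas the paper simply records a remark verifying the hypotheses and cites. One hypothesis the paper checks explicitly that you omit is that $G=\SO(3)$ acts \emph{freely} on $Z=R\in\SO(3)$ (it does, by left multiplication); this condition is part of Lawrence et al.'s statement and should be mentioned alongside the $G$-invariance of $\mathbb{P}(X)$ and the equivariance of $\mathbb{P}(Z|X)$.
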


\begin{remark}
This proposition follows from applying Theorem 6.1 \citep{lawrence2025improving} to our setting. The conditions are met: $\mathbb{P}(X)$ is $G$-invariant (typically assumed or achieved via augmentation), $\mathbb{P}(Z|X)=\delta(Z-R)$ is equivariant, and $G$ acts freely on $Z=R \in SO(3)$.
\end{remark}

Proposition~\ref{prop:generalization_pose} provides formal support for preferring the pose-conditioned equivariant (jointly invariant) architecture $f_{cond}(X, R)$ over an unstructured, non-equivariant model $f_{neq}(X, R)$ that uses the same input information. By incorporating the known relationship between transformations of the input $X$ and the pose $R$ via the joint invariance constraint, the equivariant model leverages a useful inductive bias. This bias restricts the hypothesis space to functions consistent with the underlying geometry, thereby reducing variance and improving expected generalization performance compared to a less constrained model, even if both models possess sufficient capacity to represent the optimal solution.

\section{Additional Results and Discussion}
\label{appendix:extra_exp}

\textbf{Results with extra channels}
In this section, we present additional results to support the hypothesis presented in the paper. We expand ShapeNet3D table in the main paper with additional models from literature in Tab.\ref{tab:shapenet_extrachannels_} and Tab.\ref{tab:CMU_extra}, we show an ablation study on the ShapeNet3D dataset for the part segmentation task as well as the CMU motion capture dataset for the human motion prediction task. Here we present results with two different settings of hidden features, C = 256 (gray) and C = 2048. The latter inflated model was trained to match the representation capacity of the rest of the models.

\begin{table*}[h!]
    \centering
    \caption{Comparing models with inflated representation capacity against regular models on ShapeNet3D for part segmentation task with two different settings of hidden features, C = 256 (gray) and C = 2048.}
    \label{tab:shapenet_extrachannels_}
    \resizebox{\textwidth}{!}
    {%
    \begin{tabular}{cccccccccccc}
        \toprule
        \multicolumn{12}{c}{\texttt{Rapidash} with internal $\SE(3)$ Equivariance Constraint} \\
        \midrule
        \makecell{\null \\ Model \\ Variation} & \makecell{Type} & \makecell{\breakfull \\ Coordinates \\ as Scalars} & \makecell{\breakpartial \\ Coordinates \\ as Vectors} & \makecell{\breakpartial \\ Normals \\ as Scalars } & 
        \makecell{\null \\ Normals \\ as Vectors} & \makecell{\breakcond \\ Global \\ Frame} & \makecell{\null \\ Effective \\ Equivariance} & \makecell{\\ IoU $\uparrow$ \\ (aligned)} & \makecell{\\ IoU$\uparrow$ \\ (rotated)}  & \makecell{ Normalized \\Epoch \\Time}\\ 
        \midrule
        1 & $\mathbb{R}^3$ & \xmark & - & \xmark & - & - & $\SE(3)$ & \smallbig{80.26_{\pm 0.06}}{80.31_{\pm0.15}} & \smallbig{\textbf{\color{blue}80.33}_{\pm0.13}}{80.20_{\pm0.07}} &  \smallbig{\sim 1 \;\; }{\sim 10} \\
        2 & $\mathbb{R}^3$ & \xmark & - & \cmark & - & - & $T_3$ & \smallbig{83.95_{\pm0.06}}{83.87_{\pm0.09}} & \smallbig{52.09_{\pm0.87}}{49.63_{\pm0.87}} &  \smallbig{\sim 1 \;\; }{\sim 10} \\
        3 & $\mathbb{R}^3$ & \cmark & -& \xmark & - & - & none & \smallbig{84.23_{\pm0.08}}{84.01_{\pm0.06}} & \smallbig{34.15_{\pm0.05}}{32.22_{\pm0.27}}  & \smallbig{\sim 1 \;\; }{\sim 10}\\
        4 & $\mathbb{R}^3$ & \cmark & - & \cmark & - & - & none & \smallbig{\textbf{\color{blue}84.75}_{\pm0.02}}{84.48_{\pm0.16}} & \smallbig{34.07_{\pm0.43}}{32.90_{\pm0.47}} &  \smallbig{\sim 1 \;\; }{\sim 10} \\
        \midrule
        \multicolumn{12}{c}{\texttt{Rapidash} with Internal $T_3$ Equivariance Constraint} \\
        \midrule
        16 & $\mathbb{R}^3$ & \xmark & - & \xmark & - & - & $T_3$ & \smallbig{85.26_{\pm0.01}}{85.38_{\pm0.05}} & \smallbig{31.41_{\pm0.86}}{32.78_{\pm0.59}}  & \smallbig{\sim 1 \;\; }{\sim 10} \\
        17 & $\mathbb{R}^3$ & \xmark & - & \cmark & - & - & $T_3$ & \smallbig{\textbf{\color{blue}85.82}_{\pm0.10}}{85.71_{\pm0.17}} & \smallbig{\textbf{\color{blue}35.42}_{\pm0.98}}{32.25_{\pm0.07}} &  \smallbig{\sim 1 \;\; }{\sim 10} \\
        18 & $\mathbb{R}^3$ & \cmark & - & \xmark & - & - & none & \smallbig{85.51_{\pm0.06}}{85.52_{\pm0.09}} & \smallbig{32.79_{\pm0.74}}{31.11_{\pm0.15}}  & \smallbig{\sim 1 \;\; }{\sim 10}\\
        19 & $\mathbb{R}^3$ & \cmark & - & \cmark & - & - & none & \smallbig{85.66_{\pm0.02}}{85.16_{\pm0.06}} & \smallbig{33.55_{\pm0.49}}{30.62_{\pm0.21}} &  \smallbig{\sim 1 \;\; }{\sim 10} \\
        \midrule
        \multicolumn{11}{c}{Reference methods from literature} \\
        \midrule
         DeltaConv  & -& - & - & - & - & - & - & 86.90 & - &  -\\
         PointNeXt  & - & - & - & - & - & - & - & 87.00 & -&  -\\
         GeomGCNN ${^*}$  & - & - & - & - & - & - & - & 89.10  & -&  -\\
        \bottomrule 
        \end{tabular}%
    }
    
\end{table*}

\begin{table*}[ht]
    \centering
    \caption{Ablation study on Human motion prediction task on CMU Motion Capture dataset with inflated models and regular models with two different settings of hidden features, C = 256 (gray) and C = 2048. We evaluate our models using the mean squared error ($\times 10^{-2}$) metric.}
    \label{tab:CMU_extra}
    \resizebox{\textwidth}{!}{%
    \begin{tabular}{cccccccccccc}
        \toprule
        \multicolumn{11}{c}{\texttt{Rapidash} with internal $SE(3)$ Equivariance Constraint} \\
        \midrule
        \makecell{\null \\ Model \\ Variation} & \makecell{Type} & \makecell{\breakfull \\ Coordinates \\ as Scalars} & \makecell{\breakpartial \\ Coordinates \\ as Vectors} & \makecell{\breakpartial \\ Velocity \\ as Scalars } & 
        \makecell{\null \\ Velocity \\ as Vectors} & \makecell{\breakcond \\ Global \\ Frame} & \makecell{\null \\ Effective \\ Equivariance} & \makecell{\\ MSE}  & \makecell{ \\MSE \\(rotated)}  & \makecell{ \\Normalized \\Epoch \\Time}\\
        \midrule
        1 & $\mathbb{R}^3$ & \xmark & - & \xmark & - & - & $\SE(3)$ & \smallbig{> 100}{> 100} & \smallbig{> 100}{> 100} & \smallbig{\sim 1\;\;}{\sim 20\;\;} \\
        2 & $\mathbb{R}^3$ & \xmark & - & \cmark & - & - & $T_3$ & \smallbig{5.44_{\pm{0.12}}}{{\color{blue}\textbf{4.81}}_{\pm{0.13}}} & \smallbig{24.45_{\pm{0.66}}}{{\color{blue}\textbf{25.66}}_{\pm{1.53}}} & \smallbig{\sim 1 \;\;}{\sim20\;\;} \\
        3 & $\mathbb{R}^3$ & \cmark & - & \xmark & - & - & none & \smallbig{6.88}{5.93} & \smallbig{>100}{84.54} & \smallbig{\sim1\;\;}{\sim20\;\;}\\
        4 & $\mathbb{R}^3$ & \cmark & - & \cmark & - & - & none & \smallbig{5.93}{5.53} & \smallbig{> 100}{>100} & \smallbig{\sim1\;\;}{\sim19\;\;}\\
        \midrule
        \multicolumn{11}{c}{\texttt{Rapidash} with Internal $T_3$ Equivariance Constraint} \\
        \midrule
        15 & $\mathbb{R}^3$ & \xmark & - & \xmark & - & - & $T_3$ & \smallbig{6.53}{5.99} & \smallbig{43.80}{>100} & \smallbig{\sim 1\;\;}{\sim19} \\
        16 & $\mathbb{R}^3$ & \xmark & - & \cmark & - & - & $T_3$ & \smallbig{{\color{blue}\textbf{5.3}}_{\pm{0.04}}}{24.32_{\pm{1.97}}} & \smallbig{{\color{blue}\textbf{40.69}}_{\pm{0.87}}}{>100_{\pm6.65}} & \smallbig{\sim 1\;\;}{\sim19}  \\
        17 & $\mathbb{R}^3$ & \cmark & - & \xmark & - & - & none & \smallbig{6.03}{6.82} & \smallbig{77.78}{69.42} & \smallbig{\sim 1\;\;}{\sim19} \\
        18 & $\mathbb{R}^3$ & \cmark & - & \cmark & - & - & none & \smallbig{5.49}{5.54} & \smallbig{66.47}{76.12} & \smallbig{\sim1\;\;}{\sim19} \\
         \midrule
        \multicolumn{11}{c}{Reference methods from literature} \\
        \midrule
         TFN  & - & - & - & - & - & - & - & 66.90 & - &-  \\
         EGNN  & - & - & - & - & - & - & - & 31.70 & - &-  \\
         CGENN  & - & - & - & - & - & - & - & 9.41 & - & - \\
         CSMPN & - & - & - & - & - & - & - & 7.55 & - & - \\
        \bottomrule
    \end{tabular}
    }
   
\end{table*}

\textbf{Classification task on Modelnet40 dataset} 
ModelNet40 dataset \citep{wu20153dshapenetsdeeprepresentation} contains 9,843
training and 2,468 testing meshed CAD models belonging to 40 categories. We present Modelnet40 classification results in Tab.~\ref{tab:modelnet_clean}.

\textbf{Practical implications of the results}
To get insights for building models on real data, we perform experiments on ModelNet40 on the classification task. Our experiments demonstrate similar trends to those discussed in the main paper for the experiments on ShapeNet3D, QM9, and CMU motion datasets. For a non-equivariant task such as classification, we see that equivariant methods perform best (model 8 in Tab.~\ref{tab:modelnet_clean}), models with equivariance breaking (model 4 and 15) as well as non-equivariant models (model 19) perform almost equally well, and the performance gap is small. 

\begin{table*}[ht]
    \centering
    \caption{Ablation study on ModelNet40 for classification task using accuracy as metric.}
    \label{tab:modelnet_clean}
    \resizebox{\textwidth}{!}{%
    \begin{tabular}{cccccccccc}
        \toprule
        \multicolumn{10}{c}{\texttt{Rapidash} with internal $\SE(3)$ Equivariance Constraint} \\
        \midrule
        \makecell{\null \\ Model \\ Variation} & \makecell{Type} & \makecell{\breakfull \\ Coordinates \\ as Scalars} & \makecell{\breakpartial \\ Coordinates \\ as Vectors} & \makecell{\breakpartial \\ Normals \\ as Scalars} & \makecell{\null \\ Normals \\ as Vectors} & \makecell{\breakcond \\ Global \\ Frame} & \makecell{\null \\ Effective \\ Equivariance} & \makecell{\null \\ Accuracy \\ (\%)} \\ 
        \midrule
        1 & $\mathbb{R}^3$ & \xmark & - & \xmark & - & - & $\SE(3)$ & $71.04_{\pm0.79}$ \\
        2 & $\mathbb{R}^3$ & \xmark & - & \cmark & - & - & $T_3$ & $86.75_{\pm0.12}$ \\
        3 & $\mathbb{R}^3$ & \cmark & - & \xmark & - & - & none & $86.20_{\pm0.29}$ \\
        4 & $\mathbb{R}^3$ & \cmark & - & \cmark & - & - & none & $\textbf{\color{blue}88.68}_{\pm0.24}$ \\
        \hdashline
        5 & $\mathbb{R}^3 \times S^2$ & \xmark & \xmark & \xmark & \xmark & \xmark & $\SE(3)$ & $85.14_{\pm0.28}$ \\
        6 & $\mathbb{R}^3 \times S^2$ & \xmark & \xmark & \xmark & \xmark & \cmark & $\SE(3)$ & $88.52_{\pm0.49}$ \\
        7 & $\mathbb{R}^3 \times S^2$ & \xmark & \xmark & \xmark & \cmark & \xmark & $\SE(3)$ & $86.06_{\pm0.57}$ \\
        8 & $\mathbb{R}^3 \times S^2$ & \xmark & \xmark & \xmark & \cmark & \cmark & $\SE(3)$ & $\textbf{\color{blue}89.61}_{\pm0.25}$ \\
        9 & $\mathbb{R}^3 \times S^2$ & \xmark & \cmark & \xmark & \xmark & \xmark & $\SO(3)$ & $85.94_{\pm0.11}$ \\
        10 & $\mathbb{R}^3 \times S^2$ & \xmark & \cmark & \xmark & \xmark & \cmark & $\SO(3)$ & $89.09_{\pm0.31}$ \\
        11 & $\mathbb{R}^3 \times S^2$ & \xmark & \cmark & \xmark & \cmark & \xmark & $\SO(3)$ & $85.41_{\pm2.17}$ \\
        12 & $\mathbb{R}^3 \times S^2$ & \xmark & \cmark & \xmark & \cmark & \cmark & $\SO(3)$ & $89.13_{\pm0.13}$ \\
        \hdashline
        13 & $\mathbb{R}^3 \times S^2$ & \xmark & \xmark & \cmark & \xmark & \xmark & $T_3$ & $86.98_{\pm0.69}$ \\
        14 & $\mathbb{R}^3 \times S^2$ & \cmark & \xmark & \xmark & \xmark & \xmark & none & $87.25_{\pm1.07}$ \\
        15 & $\mathbb{R}^3 \times S^2$ & \cmark & \xmark & \cmark & \xmark & \xmark & none & $\textbf{\color{blue}88.88}_{\pm0.18}$ \\
        \midrule
        \multicolumn{9}{c}{\texttt{Rapidash} with Internal $T_3$ Equivariance Constraint} \\
        \midrule
        16 & $\mathbb{R}^3$ & \xmark & - & \xmark & - & - & $T_3$ & $87.55_{\pm0.12}$ \\
        17 & $\mathbb{R}^3$ & \xmark & - & \cmark & - & - & $T_3$ & $88.51_{\pm0.61}$ \\
        18 & $\mathbb{R}^3$ & \cmark & - & \xmark & - & - & none & $88.47_{\pm0.31}$ \\
        19 & $\mathbb{R}^3$ & \cmark & - & \cmark & - & - & none & $\textbf{\color{blue}89.02}_{\pm0.07}$ \\
        \midrule
        \multicolumn{9}{c}{Reference methods from literature} \\
        \midrule
         PointNet  & - & - & - & - & - & - & - & 90.7 \\
         PointNet++ & - & - & - & - & - & - & - & 93.0 \\
         DGCNN & - & - & - & - & - & - & - & 92.6 \\
        \bottomrule \vspace{-3mm}
    \end{tabular}
    }
    \vspace{-2mm}
\end{table*}

\textbf{Scaling experiments}
\label{appendix:reduced}
To show the effect of adding more training data, we perform ShapeNet3D segmentation for $60\%, 80\%, 100\%$ of the data. The gap in performance is more for models with effective equivariance $T_3$ as compared to effective equivariance $\SO(3)$ and $\SE(3)$. See Tab. \ref{tab:shapenet_scaling}.

\begin{table}[ht]
    \centering
    \caption{Ablation study on ShapeNet 3D part segmentation reporting instance mean IoU for a randomly rotated dataset, comparing validation-set performance when trained on different percentages of the training dataset}
    \label{tab:shapenet_scaling}
    \resizebox{\textwidth}{!}
    {%
    \begin{tabular}{cccccccccccc}
        \toprule
        \multicolumn{12}{c}{\texttt{Rapidash} with internal $\SE(3)$ Equivariance Constraint} \\
        \midrule
        \makecell{\null \\ Model \\ Variation} & \makecell{Type} & \makecell{\breakfull \\ Coordinates \\ as Scalars} & \makecell{\breakpartial \\ Coordinates \\ as Vectors} & \makecell{\breakpartial \\ Normals \\ as Scalars } & 
        \makecell{\null \\ Normals \\ as Vectors} & \makecell{\breakcond \\ Global \\ Frame} & \makecell{\null \\ Effective \\ Equivariance} & \makecell{\\ IoU (rotated) $\uparrow$ \\ 100\%} & \makecell{\\ IoU$\uparrow$ \\ 80\%} & \makecell{\\ IoU$\uparrow$ \\ 60\% } & \makecell{ Normalized \\Epoch \\Time}\\ 
        \midrule
        
        \attentionarrow \; 8 \;\;\;\; & $\mathbb{R}^3 \times S^2$ & \xmark & \xmark & \xmark & \cmark & \cmark & $\SE(3)$ & 85.45 & 85.46 & 85.10 \\
        9 & $\mathbb{R}^3 \times S^2$ & \xmark & \cmark & \xmark & \xmark & \xmark & $\SO(3)$ & 84.48 & 84.21 & 84.07 & - \\
        
        \midrule
        \multicolumn{12}{c}{\texttt{Rapidash} with Internal $T_3$ Equivariance Constraint} \\
        \midrule
        
        17 & $\mathbb{R}^3$ & \xmark & - & \cmark & - & - & $T_3$ & 83.00 & 82.50 & 82.03  & - \\
        
        \bottomrule
        \end{tabular}%
    }                                
\end{table}

\section{Implementation details}
\label{appendix:implentation}
We implemented our models using PyTorch \citep{paszke2019pytorch}, utilizing PyTorch-Geometric’s message passing and graph operations modules \citep{fey2019fast}, and employed Weights and Biases for experiment tracking and logging. A pool of GPUs, including A100, A6000, A5000, and 1080 Ti, was utilized as computational units. To ensure consistent performance across experiments, computation times were carefully calibrated, maintaining GPU homogeneity throughout.

For all experiments, we use \texttt{Rapidash} with  7 layers with 0 fiber dimensions for $\mathbb{R}^3$ and 0 or 8 fiber dimensions for $\mathbb{R}^3 \times S^2$. The polynomial degree was set to 2. We used the Adam optimizer \citep{kingma2014adam}, with a learning rate of $1e-4$, and with a CosineAnnealing learning rate schedule with a warm-up period of 20 epochs.

\paragraph{QM9} QM9 dataset \cite{QM9} contains up to 9 heavy atoms and 29 atoms, including hydrogens.  We use the train/val/test partitions introduced in \citet{pmlr-v70-gilmer17a}, which consists of 100K/18K/13K samples respectively for each partition.

\paragraph{ShapeNet 3D} ShapeNet 3D dataset \citep{chang2015shapenetinformationrich3dmodel} is used for part segmentation and generation tasks. ShapeNet consists of 16,881 shapes from 16 categories. Each shape is annotated with up to six parts, totaling 50 parts. We use the point sampling of 2,048 points and the train/validation/test
split from \citep{qi2017pointnetdeeplearningpoint}.
For this task, we trained we trained model variation (1-4 \& 16-19) in Tab.\ref{tab:CMU_extra} and Tab.\ref{tab:shapenet_extrachannels} with two different settings of hidden features, C = 256 (gray) and C = 2048. The later inflated model was trained to match the representation capacity of the rest of the models. For the segmentation task, we use rotated samples and compute IoU with aligned and rotated samples. All the models were trained for 500 epochs with a learning rate of $5e-3$ and weight decay of $1e-8$.

For the segmentation task, we trained we trained model variation (1,2 \& 6,7) in Tab.\ref{tab:shapenet_extrachannels} with two different settings of hidden features C = 256 (gray) and C = 2048. The latter inflated model was trained to match the representation capacity of the rest of the models.  All the models were trained for 500 epochs with a learning rate of $5e-3$ and weight decay of $1e-8$.

\paragraph{CMU Motion Prediction} 
\begin{figure}[ht]
    \centering
    \includegraphics[width=0.7\linewidth]{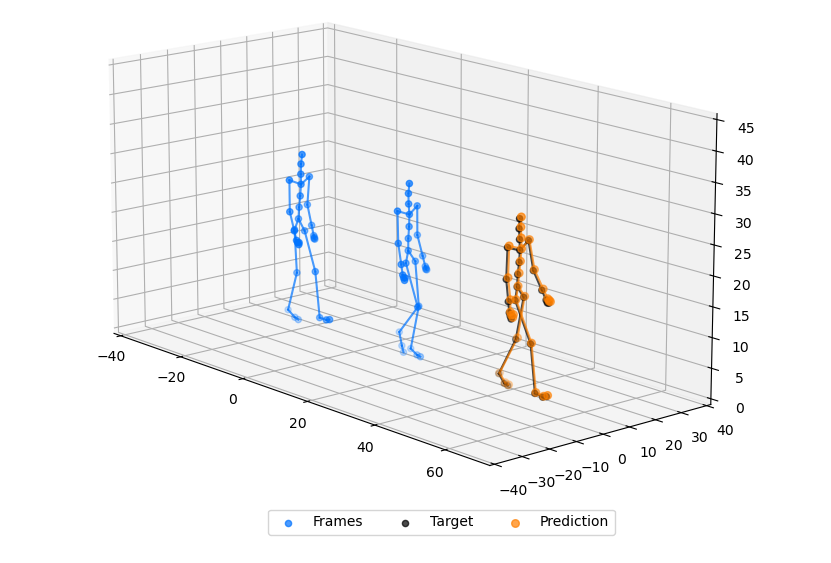}
    \caption{Depiction of an instance from CMU motion capture dataset.}
    \label{fig:CMU-motion}
\end{figure}

For the motion prediction task, we evaluate our models on the CMU Human Motion Capture dataset \citep{Gross-2001-8255}, consisting of 31 equally connected nodes, each representing a specific position on the human body during walking. Given node positions at a random frame, the objective is to predict node positions after 30 timesteps. As per \citet{huang2022equivariantgraphmechanicsnetworks} we use the data of the 35th human subject for the experiment. See Figure  \ref{fig:CMU-motion} to see instance of the dataset. 
For this task, we trained model variation (1-4 \& 16-19) in Tab.~\ref{tab:CMU_extra} with two different settings of hidden features, C = 256 (gray) and C = 2048. The latter inflated model was trained to match the representation capacity of the models (5-15). All the models were trained for 1000 epochs with a learning rate $5e-3$ and weight decay of $1e-8$.

\paragraph{ModelNet40} For this task we trained 9 model variations presented in Tab.~\ref{tab:modelnet_clean} 
All the models were trained for hidden features C = 256 for 120 epochs with a learning rate $1e-4$ and weight decay of $1e-8$.

\paragraph{Diffusion Model} Unlike EDM \cite{hoogeboom2022equivariantdiffusionmoleculegeneration} that uses a DDPM-like diffusion model with a deterministic sampler, we use a stochastic sampler proposed in \cite{karras2022elucidatingdesignspacediffusionbased}. We condition the diffusion model with feature scaling and noise scaling and combine outputs with skip connections, allowing for faster sampling. The sampler used in this work implements a stochastic differential equation with a second-order connection.

\paragraph{Geometric task complexity}
Classification/Regression does not change with group transformation of the input, thus invariant to the transformation. Segmentation with/without a global frame is an invariant/equivariant task. Dynamics (It varies with the actual dynamics (e.g. can be a simple motion equation vs fluid dynamics with instability terms and inputs (positions, velocity, additional parameters, thresholds etc), is generally an equivariant task, and molecular generation is an equivariant task, often with certain symmetry breaking to form different structures. Here is a figure depicting geometric task complexity for the tasks covered in the paper. 
\begin{figure*}[b]
\centering
    \scalebox{.9}{\includegraphics[width=1.\linewidth]{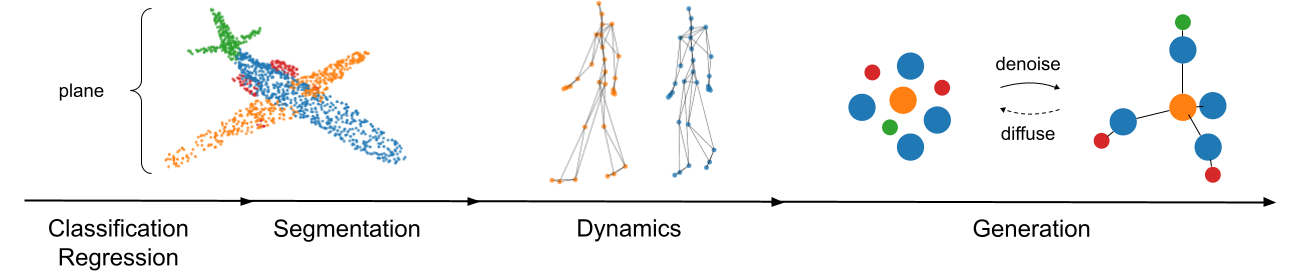}}
    \caption{Progression of tasks based on geometric complexities of each task.}
    \label{fig:task_complexities}

\end{figure*}

\newpage


\clearpage

\end{document}